\newcommand{\cblock}[3]{
  \hspace{-1.5mm}
  \begin{tikzpicture}
    [
    node/.style={square, minimum size=10mm, thick, line width=0pt},
    ]
    \node[fill={rgb,255:red,#1;green,#2;blue,#3}] () [] {};
  \end{tikzpicture}%
}
\definecolor{Yellow}{rgb}{1.0, 1.0, 0.0}
\definecolor{Gray}{rgb}{0.5, 0.5, 0.5}
\definecolor{gg}{HTML}{32932f}
\definecolor{stabilo}{HTML}{90EE90}
\definecolor{purple}{HTML}{C3B1E1}
\theoremstyle{plain}
\newtheorem{theorem}{Theorem}[section]
\newtheorem{lemma}[theorem]{Lemma}
\newtheorem{corollary}[theorem]{Corollary}
\theoremstyle{definition}
\newtheorem{definition}[theorem]{Definition}
\theoremstyle{remark}
\def\Nset{\mathbb{N}}
\def\Rset{\mathbb{R}}
\def\node{{\mathsf n}}
\DeclarePairedDelimiter{\abs}{\lvert}{\rvert} 
\DeclarePairedDelimiter{\bracket}{[}{]}
\DeclarePairedDelimiter{\curl}{\{}{\}}
\DeclarePairedDelimiter{\paren}{(}{)}
\DeclarePairedDelimiter{\norm}{\|}{\|}
\DeclareMathOperator*{\E}{\mathbb{E}}
\DeclareMathOperator*{\argmax}{\rm argmax}
\let\Pr\relax 
\DeclareMathOperator*{\Pr}{\mathbb{P}}
\providecommand{\abs}[1]{\lvert#1\rvert}
\providecommand{\norm}[1]{\lVert#1\rVert}
\newcommand{\cH}{\mathcal{H}}
\newcommand{\cN}{\mathcal{N}}
\newcommand{\cX}{\mathcal{X}}
\newcommand{\mat}[1]{{\mathbf #1}}
\newcommand{\br}{{\mathbf r}}
\newcommand{\bu}{{\mathbf u}}
\newcommand{\bv}{{\mathbf v}}
\newcommand{\I}{\mat{I}}
\renewcommand{\a}{\mat{a}}
\newcommand{\1}{\mat{1}}
\newcommand{\ignore}[1]{}
\newenvironment{proof*}{\trivlist\item[\hskip\labelsep{\it Proof}{.}]}
\def\eqref#1{equation~\ref{#1}}
\def\1{\bm{1}}
\def\vb{{\bm{b}}}
\def\vu{{\bm{u}}}
\def\vx{{\bm{x}}}
\def\mW{{\bm{W}}}
\DeclareMathAlphabet{\mathsfit}{\encodingdefault}{\sfdefault}{m}{sl}
\SetMathAlphabet{\mathsfit}{bold}{\encodingdefault}{\sfdefault}{bx}{n}
\newcommand{\softmax}{\mathrm{softmax}}
\newcommand{\BOS}{\left\langle{\tiny\textsc{BOS}}\right\rangle}
\newcommand{\SEP}{\left\langle{\tiny\textsc{SEP}}\right\rangle}
\newcommand{\EOS}{\left\langle{\tiny\textsc{EOS}}\right\rangle}
\newcommand{\PAD}{\left\langle{\tiny\textsc{PAD}}\right\rangle}
\newcommand{\EDGE}{\left\langle{\tiny\textsc{EDGE}}\right\rangle}
\title{Mixture of Parrots \includegraphics[width=1em]{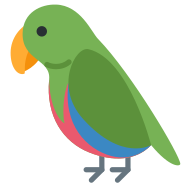}\includegraphics[width=1em]{images/parrot.png}\includegraphics[width=1em]{images/parrot.png}: Experts improve \\ memorization more than reasoning}
\author{Samy Jelassi\thanks{Correspondence to: Samy Jelassi $<$\url{sjelassi@fas.harvard.edu}$>$} \\
Harvard University\\
\And
Clara Mohri \\
Harvard University\\
\And
David Brandfonbrener\\ Harvard University\\Kempner Institute
\And Alex Gu\\
MIT
\And Nikhil Vyas\\Harvard University
\And Nikhil Anand\\Harvard University\\Kempner Institute
\And David Alvarez-Melis\\Harvard University\\Kempner Institute
\And Yuanzhi Li\\Microsoft Research
\And Sham M. Kakade\\Harvard University\\Kempner Institute
\And Eran Malach\\Harvard University\\Kempner Institute
}
\begin{document}

\maketitle

\begin{abstract}
The Mixture-of-Experts (MoE) architecture enables a significant increase in the total number of model parameters with minimal computational overhead. 
However, it is not clear what performance tradeoffs, if any, exist between MoEs and standard dense transformers.
In this paper, 
we show that as we increase the number of experts (while fixing the number of active parameters), the memorization performance consistently increases while the reasoning capabilities saturate. We begin by analyzing the theoretical limitations of MoEs at reasoning. We prove that there exist graph  problems that cannot be solved by any number of experts of a certain width; however, the same task can be easily solved by a dense model with a slightly larger width. 
On the other hand, we find that on memory-intensive tasks, MoEs can effectively leverage a small number of active parameters with a large number of experts to memorize the data. 
We empirically validate these findings on synthetic graph problems and memory-intensive closed book retrieval tasks. 
Lastly, we  pre-train a series of MoEs and dense transformers and evaluate them on commonly used benchmarks in math and natural language. 
We find that increasing the number of experts helps solve knowledge-intensive tasks, but fails to yield the same benefits for reasoning tasks.

\end{abstract}

\section{Introduction}

The explosion in capabilities of large language models in recent years has largely been enabled by scaling their size, as measured by the number of parameters in the model. In the standard Transformer architecture, scaling the number of parameters entails a proportional increase in computational cost, e.g. doubling the number of parameters requires doubling the number of floating-point operations (FLOPs), making training and inference more computational intensive. Mixture-of-Experts (MoE) were introduced  as a solution for this problem \citep{shazeer2017outrageously,lepikhin2020gshard,fedus2022switch}. MoEs replace the single MLP in each Transformer block with multiple MLPs (called experts), where each token is routed to a few experts based on a linear routing function. The number of parameters in the MoE layer therefore increases with the total number of experts, while the compute increases only with the number of ``active'' experts (i.e., the number of experts to which the token is routed to). This offers a promising option for scaling models: increase the number of experts instead of the model dimension or its depth. For this reason, MoEs have become very popular, and many frontier models today are based on the MoE architecture \citep{achiam2023gpt,databricks2023,anil2023gemini,dai2024deepseekmoe,jiang2024mixtral,yang2024qwen2}.

In this work we study whether MoE indeed offers a ``free-lunch'', enabling gains in performance with no computational cost. Interestingly, we find that the benefit from MoEs greatly depends on the task at hand. We show that for reasoning-based tasks, such as graph problems and mathematical reasoning, MoEs offer limited performance gains, and increasing the number of experts cannot compete with scaling the dimension (width) of the model. On the other hand, for memory-intensive tasks, we show that scaling the number of experts is competitive with scaling standard ``dense'' MLPs.

\begin{figure*}[t]
\centering
\begin{subfigure}[t]{0.31\textwidth}
\centering
\vskip 0pt
    \includegraphics[height=3.cm]{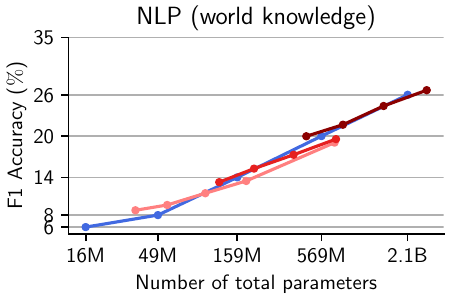}
   \scriptsize
    \vspace*{-.2cm}
    
        \mbox{\hspace*{1.cm}\cblock{65}{105}{225}\hspace{1mm}Dense transformer \hspace{1.5mm} \cblock{255}{128}{ 128}\hspace{1mm}MoE (18M active parameters) \hspace{1.5mm} \cblock{230}{32}{ 32}\hspace{1mm}MoE (58M active parameters)\hspace{1.5mm}  \cblock{139}{0}{ 0}\hspace{1mm}MoE (200M active parameters)}
    \vspace{.0cm}
    \caption{Evaluation: world knowledge}
    \label{fig:world_knowledge_intro}
\end{subfigure}
\hfill
\begin{subfigure}[t]{0.31\textwidth}
\centering
\vskip 0pt
\includegraphics[height=3.cm]{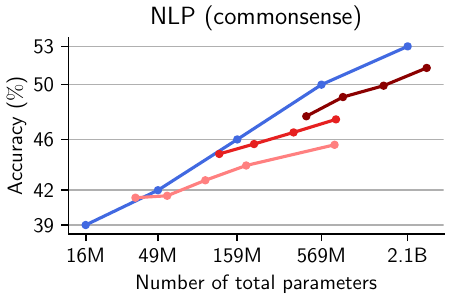}

\vspace*{.25cm}

\hspace*{1.5cm}\caption{Evaluation: commonsense}
\label{fig:commonsense_intro}
\end{subfigure}
\hfill
\begin{subfigure}[t]
{0.31\textwidth}
\centering
\vskip 0pt
\includegraphics[height=3.cm]{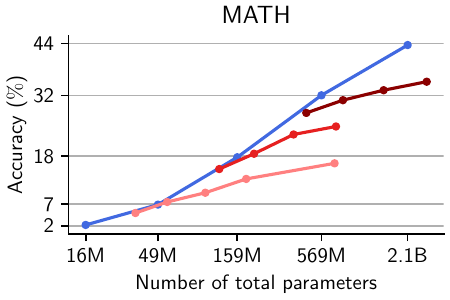}
\vspace{.25cm}
\caption{Evaluation: math}
\label{fig:math_intro}
\end{subfigure}
\label{fig:main}
\caption{\textbf{(a) Evaluation: world knowledge.} We train a series of dense transformers and MoEs on 65B tokens from a corpus essentially made of Fineweb-edu, Cosmopedia and Wikipedia (see \autoref{sec:pretrain} for details). We then evaluate the models on several world knowledge benchmarks (e.g., TriviaQA \citep{joshi2017triviaqa}, Natural Questions \citep{kwiatkowski2019natural}) and report the average F1 accuracy. Surprisingly, at a fixed number of total parameters, MoEs with substantially fewer active parameters approximately match the performance of dense models. This highlights the importance of experts in tasks that require memorization.  
\textbf{(b) Evaluation: commonsense.} Here we evaluate the aforementioned  pre-trained models on natural language commonsense benchmarks (e.g., HellaSwag \citep{zellers2019hellaswag}, WinoGrande \citep{sakaguchi2021winogrande}). On these reasoning tasks, we observe that MoEs perform worse than dense models and more significant benefits are obtained by increasing the number of active parameters. 
\textbf{(c) Evaluation: math.} Here we train a series of dense transformers and MoEs on 65B tokens from a corpus essentially made of Proof-Pile2
\citep{azerbayev2023llemma} (see \autoref{sec:pretrain} for details). The results are consistent with the ones in (b): MoEs perform worse than dense models at equal number of total parameters.}\label{fig:introduction}

\vspace*{-1.5cm}

\end{figure*}

To demonstrate these claims, we begin with a theoretical analysis of MoEs and dense models. 
We use communication-complexity lower bounds to show that a single-layer MoE requires a critical dimension to solve a simple graph connectivity problem, implying that MoEs offer no benefit for solving this problem and only consume unnecessary memory. On the other hand, we show that for a pure memorization task, where the model only needs to ``remember'' an arbitrary set of examples, scaling the number of experts is equivalent to scaling the number of parameters in dense transformers, implying a significant computational gain when fixing the number of active parameters (\autoref{sec:theory}). We continue by experimentally validating these results, comparing MoEs against dense models on synthetic training data. We train these models on finding the shortest path in random graphs, where we show that MoE accuracy does not improve as we increase the number of experts, but that accuracy consistently increases with width for a dense transformer (\autoref{fig:shortest_path_n50_total_params}).
Following this, we train different models on the task of memorizing a large phone-book. We demonstrate that MoEs excel in memorization, matching the performance of dense transformers with the same number of total parameters but with substantially less computational cost (\autoref{fig:phonebook_total_params}).

Finally, we train dense transformers and MoEs on real datasets of mathematical reasoning and natural language, and perform intensive benchmarking of these models on a wide variety of downstream tasks. For memory-intensive tasks, MoEs surprisingly have a great advantage, where increasing the number of experts can match the performance of large dense models (\autoref{fig:world_knowledge_intro}). However, we show that for tasks that rely on reasoning, scaling the number of experts cannot compete with increasing the model dimension (Figures \ref{fig:commonsense_intro}-\ref{fig:math_intro}). Moreover, MoEs exhibit some memorization behaviors when trained on math problems (\autoref{fig:generalization_gap}).   Taken together, our results show that the gains from using MoEs depend greatly on the nature of the training data and downstream task, and that while MoEs can improve performance in certain cases, sometimes increasing the effective size (width) of the model is unavoidable. 
\section{Related work}

\paragraph{Mixture of Experts.} Mixture-of-Experts (MoE) date back to the work of \cite{jacobs1991adaptive,jordan1994hierarchical}. \cite{shazeer2017outrageously,fedus2022switch} were the first to scale this idea to deep learning and obtain state-of-the-art models in machine translation. Since then, several works have improved their routing algorithms \citep{lepikhin2020gshard,lewis2021base,roller2021hash,clark2022unified,zhou2022mixture,antoniak2023mixture,zhong2024lory}, have improved their downstream performance after finetuning \citep{du2022glam,zoph2022st} or made their training and inference more efficient \citep{rajbhandari2022deepspeed,gale2023megablocks,pan2024dense,tan2024scattered}. However, only a few papers have studied the science of MoEs and their comparison with dense transformers. \cite{clark2022unified,krajewski2024scaling} establish scaling laws for MoEs. \cite{chen2022towards} design a specific classification problem where a model with multiple experts provably outperforms one with only one expert. \cite{shazeer2017outrageously,lepikhin2020gshard,artetxe2021efficient,lewis2021base,fedus2022switch,du2022glam} show that given a fixed FLOP budget, MoEs are always better. However, these papers claim that on a per parameter basis, MoEs always seem comparatively worse than dense models. In this paper, we temper this claim by showing that it depends on the \emph{nature of the task} at hand: on reasoning tasks, we validate this claim but on memory-intensive tasks, equally-sized MoEs perform as well as dense transformers.

\paragraph{Language models and memorization.} Large language models (LLMs) store a considerable amount of knowledge in their parameters \citep{petroni2019language,heinzerling2020language}. 
They memorize useful knowledge such as facts and commonsense \citep{zhao2024large}.
Many works studied how memorization occurs in LLMs by developing tools to locate the knowledge in the model \citep{meng2022locating,allen2023physics,liu2024devil} or by tracking the training dynamics \citep{tirumala2022memorization,speicher2024understanding}. We draw inspiration from \cite{allen2023physics} and evaluate the memorization of our models by pre-training them on a mixture of datasets that includes Wikipedia, and at test time, evaluate them on world knowledge benchmarks, which are essentially question answering tasks on Wikipedia facts. 
With respect to theoretical findings, \cite{kim2023provable,mahdavi2023memorization,madden2024upper,nichani2024understanding}  provide upper bounds on the number of parameters needed for dense transformers to perform memorization tasks under various conditions.

\paragraph{Language models and reasoning.} In recent years, transformer-based language models have displayed remarkable effectiveness in solving a broad range of reasoning tasks. Specifically, the reasoning capabilities of transformers have been studied in the context of arithmetic problems \citep{jelassi2023length,cho2024position,hou2024universal,zhou2024transformers, mcleish2024transformersarithmeticrightembeddings,lee2023teaching},  mathematical reasoning \citep{zhang2022unveiling, imani2023mathprompter,wei2022chain} graph problems \citep{sanford2024understanding,fatemi2023talk,jin2023large,wang2024can} and code challenges \citep{shi2024can,zhu2024deepseek}. Recently, state-of-the-art language models were used for solving complex math olympiad problems \citep{deepmind2024ai,numina2024winning,openai2024o1}. 
With respect to theoretical findings, various works study the reasoning capabilities of transformers, relating their expressive power to other complexity classes and formal languages \citep{DBLP:conf/icml/WeissGY21, zhou2023algorithmstransformerslearnstudy,strobl2024formal,chen2024can}. Other works study how chain-of-thought can improve the reasoning capabilities of language models in terms of expressive power and learnability \citep{abbe2024fartransformersreasonlocality,merrill2023expresssive,malach2023auto}. 

\section{Theory: representational capacity}\label{sec:theory}

In this section, we analyze the capability of MoE transformers compared to standard (dense) models. We begin by studying a simple graph problem that requires scaling the hidden dimension of the transformer, showing that MoEs with small hidden dimension cannot solve this problem, regardless of the number of experts used. Then, we show that MoEs can effectively memorize random inputs, requiring significantly less computational resources (active parameters) compared to dense models.

\subsection{Setting}
Consider a one-layer transformer $f \in \text{Transformer}_{m, H, 1}^N$ which takes as input a sequence of length $N$ and has logarithmic bit-precision. $f$ embeds the input into dimension $m$ via the function $\phi$. $f$ has $H \geq 1$ attention heads, whose outputs are combined via concatenation before we apply point-wise function $\psi$ \footnote{In multi-layer Transformers, each layer outputs a vector of size $m$. However, since our focus in this section will be on binary classification problems, we will let the transformer output a single scalar, and we interpret the output of the final token as the prediction for the classification task.}. 

Define the parameters as $Q_h, V_h, K_h \in \Rset^{m \times m}$, $\phi: \cX \rightarrow \Rset^m$, $\psi: \Rset^m \rightarrow \Rset$.  
The output of $f$ is:
\[
f(\vx_1, \dots, \vx_N) = \psi \paren*{\bracket*{\softmax \paren*{\phi(x_N)^\top Q_h K_h^\top \phi(X)}\phi(X) V_h }_{h \in [H]} }.
\]
$f$ is a \emph{dense} transformer, if $\psi$ is an MLP, i.e. function of the form:
\[
\psi(\vx) = \vu^\top \sigma(\mW \vx + \vb)\mathrm{,~~for~} \mW \in \Rset^{m' \times m}, b \in \Rset^{m'}, \vu \in \Rset^{m'}
\]
where $\sigma$ is the ReLU activation function. 
$f \in \text{Transformer}_{m, H, 1, K}^N$ is a \emph{sparse} (MoE) transformer with $K$ experts if $\psi$ is a function of the form:
\[
\psi(\vx) = \bu_{i}^{\top} \sigma(\mW_i \vx + \vb_i)\mathrm{~for~}i=\argmax_{j} \br_j^\top \vx
\]
where $\mW_1, \dots, \mW_k \in \Rset^{m' \times m}$, $\vb_1, \dots, \vb_k \in \Rset^{m'}$, $\bu_1, \dots, \bu_k \in \Rset^{m'}$ are the parameters of each expert and $r_1, \dots, r_k$ define the routing function (we use top-1 routing).

\subsection{MoEs require a critical hidden size to solve graph reasoning tasks}
In this section, we analyze the graph reasoning capabilities of dense and sparse transformers. We define the length-2 path problem on a graph, and use it as a means to understand other graph reasoning tasks such as graph connectivity, shortest path, and cycle detection. 

\begin{definition}[Length-2 Path Problem]
    The input is a graph $G = (V, E)$. The source  $s \in V$ and a destination $d \in V$ are fixed for all tasks as the $0$ and $\abs{V}$ vertex. The length-2 path problem asks whether there is a path of length $2$ from $s$ to $d$. 
\end{definition}
Graph connectivity, shortest path, and cycle detection are all graph reasoning tasks which reduce to the length-2 path problem due to \citep{sanford2024understanding} and Lemma \ref{lemma:SD_L2path}.
We provide a lower-bound on the width required for a sparse transformer to solve the length-2 path problem, and an upper-bound on the width required for a dense transformer to solve the problem. Further, we show a separation between dense and sparse transformers with the same number of parameters: for a sufficiently large amount of experts in the sparse model, it cannot solve the same problem that a dense model can solve with the same amount of \emph{total} parameters.

\paragraph{Lower bound on width of depth-1 MoE for reasoning.}
\label{sssec:lb_width_moe}
We begin by showing a lower-bound on the width for a depth-1 mixture of expert model for the length-2 path problem. This lower bound implies a lower bound for search and retrieval tasks such as graph connectivity, shortest path, and cycle detection. 
\begin{theorem}[Length-2 path lower-bound on sparse transformers]
\label{thm:cc_lowerbound}
    For some input sequence $G = (V, E)$, fix two disjoint subsets $A, B \subset [N-1]$, and consider a single-layer transformer $f \in \text{Transformer}_{m, H, 1, K}^N$   with $O(\log N)$-bit precision that solves length-2 path for any input $X$ where $X_A$ is a function of edges with the source $s$, $X_B$ is a function of edges with the destination $d$. Then, $f$ has width satisfying $mH = \Omega(\abs{V} / \log N)$.
\end{theorem}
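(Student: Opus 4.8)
The plan is to reduce from two-party set disjointness and use the standard correspondence between single-layer transformers and communication protocols that underlies the graph lower bounds of \citet{sanford2024understanding}; the only new ingredient is the observation that the top-$1$ gate of an MoE cannot reduce the communication cost, so the bound comes out independent of the number of experts $K$. For the reduction, given a disjointness instance in which Alice holds $S\subseteq[n]$ and Bob holds $T\subseteq[n]$ with $n=\abs{V}-2$, I would form the graph on $\{s\}\cup\{v_1,\dots,v_n\}\cup\{d\}$ whose only edges are $\{s,v_i\}$ for $i\in S$ and $\{v_i,d\}$ for $i\in T$; a length-$2$ path from $s$ to $d$ then exists iff $S\cap T\neq\emptyset$. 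Let the tokens indexed by $A$ encode, one per vertex $v_i$, whether $\{s,v_i\}\in E$ (so that $X_A$ is a function of $S$ alone), let the tokens indexed by $B$ encode whether $\{v_i,d\}\in E$ (so that $X_B$ is a function of $T$ alone), and fix all remaining tokens — including the query token $x_N$, which is legitimate because $N\notin A\cup B$ and $s,d$ are fixed — to encode ``no further edges.'' These inputs satisfy the hypothesis of the theorem, so $f$ must answer length-$2$ path correctly on all of them, and since $\mathrm{DISJ}_n$ has deterministic communication complexity $\Omega(n)=\Omega(\abs{V})$, it suffices to simulate $f$ by a protocol of cost $O(mH\log N)$.

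For the protocol, note that the prediction is the last-token value $f(\vx_1,\dots,\vx_N)=\psi\paren*{\bracket*{o_h}_{h\in[H]}}$ where
\[
o_h=\frac{\sum_{i=1}^N e_{h,i}\,\phi(x_i)^\top V_h}{\sum_{i=1}^N e_{h,i}},\qquad e_{h,i}=\exp\paren*{\phi(x_N)^\top Q_h K_h^\top\phi(x_i)}.
\]
For each head $h$ I split both the vector numerator and the scalar denominator into the partial sum over $i\in A$, the partial sum over $i\in B$, and the partial sum over the remaining (public) positions. Alice knows $\phi(x_N)$, the parameters, the public tokens, and $X_A$, so she can compute her $A$-parts; symmetrically Bob computes his $B$-parts. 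Alice sends her $H$ partial numerators (each a vector in $\Rset^m$) and her $H$ partial denominators (scalars), Bob sends his, and by the $O(\log N)$-bit precision assumption each transmitted number costs $O(\log N)$ bits, for a total of $O(mH\log N)$. After this single round both players can reconstruct every $o_h$, hence the whole $mH$-dimensional vector $\bracket*{o_h}_{h\in[H]}$ fed into $\psi$.

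The crucial point is that the MoE structure is now free: given $\bracket*{o_h}_{h\in[H]}$, both players evaluate $\psi$ locally, because for an MoE they first compute $i^\star=\argmax_j\br_j^\top\bracket*{o_h}_{h\in[H]}$ — which both can do, as the routing vectors $\br_j$ are fixed parameters and the argument is shared — and then both apply the selected expert $\vx\mapsto\bu_{i^\star}^\top\sigma(\mW_{i^\star}\vx+\vb_{i^\star})$. Thus $K$ never enters the count: the protocol costs $O(mH\log N)$ and outputs the correct length-$2$-path answer regardless of the number of experts, and combining with the $\Omega(\abs{V})$ disjointness lower bound gives $mH\log N=\Omega(\abs{V})$, i.e.\ $mH=\Omega(\abs{V}/\log N)$.

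I expect the main obstacle to be the numerical bookkeeping inside the protocol: one has to verify that the attention logits, their exponentials, and partial sums of up to $N$ such quantities are all representable and transmittable with $O(\log N)$ bits, so that the message length really is $O(mH\log N)$ rather than something polynomially larger. I would handle this exactly as in the log-precision transformer analysis of \citet{sanford2024understanding}, importing their rounding and precision lemmas rather than redoing them. A secondary, minor point is that the reduction needs $\abs{A},\abs{B}\ge\abs{V}-2$ and hence $N=\Omega(\abs{V})$, but this is automatic in the only regime where the statement is non-vacuous, since merely writing down the $\Theta(\abs{V})$ candidate edges at $s$ and at $d$ already forces $N$ to be that large.
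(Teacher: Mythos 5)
Your proposal is correct and follows essentially the same route as the paper: reduce set disjointness to length-2 path by placing Alice's set as edges at $s$ and Bob's as edges at $d$ (the paper's Lemma~\ref{lemma:SD_L2path}), simulate the single-layer transformer by having the two parties exchange per-head partial softmax numerators and denominators at $O(mH\log N)$ bits (the paper's Lemma~\ref{lemma:concatenation}, which in turn invokes Lemma~39 of \citet{sanford2024understanding}), and observe that since the routing decision $\argmax_j \br_j^\top Y(X_N)$ and the chosen expert both act only on the already-shared attention output, the MoE gate costs no extra communication. The only surface difference is that you spell out the protocol explicitly where the paper defers the bookkeeping (including the log-precision rounding of the exponentials and partial sums) to \citet{sanford2024understanding}, which you also do at the end.
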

The proof follows almost identically from the proof in \citep{sanford2024understanding} for the class $\text{Transformer}_{m, H, 1}^N$. The original proof does not place constraints on the function $\psi$ and is based on a communication-complexity argument. As such we may design $\psi$ so that it first routes and then chooses which expert to apply. We give a complete proof in Appendix \ref{app:proofs}.  As such, the result of \citep{sanford2024understanding} can also be extended to the class $\text{Transformer}_{m, H, 1, K}^N$.
\paragraph{Upper bound on width of depth-1 dense transformer for reasoning.}
\label{sssec:ub_width_dense}
In this section we give an upper bound for the width required for a dense model to solve the length-2 path problem.

\begin{theorem}[Length-2 path width upper bound for transformer]
\label{thm:mem_ub}
There exists a transformer of width $\abs{V}$, $H = 1$, and $O(\log N)$-bit precision that solves length-2 path problem for any input.
\end{theorem}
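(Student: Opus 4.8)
The plan is to exhibit an explicit one‑layer transformer. Recall that a length‑$2$ path from $s$ to $d$ exists iff some vertex $v\notin\{s,d\}$ is a common neighbor, i.e. $(s,v)\in E$ and $(v,d)\in E$. So it suffices to compute, for every candidate $v\in V$, the count $c_v:=\mathbf 1[(s,v)\in E]+\mathbf 1[(v,d)\in E]\in\{0,1,2\}$ and decide whether $\max_v c_v=2$. I will implement this with one attention head performing uniform averaging, an embedding into $\Rset^{\abs{V}}$ that deposits each relevant edge onto the coordinate of its ``free'' endpoint, and a one‑hidden‑layer ReLU network $\psi$ realizing the thresholded maximum.

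Set $m=\abs{V}$ with standard basis $\{\mathbf e_v\}_{v\in V}$. On a token encoding edge $(a,b)$, define $\phi=\mathbf e_b$ if $a=s$ and $b\notin\{s,d\}$; $\phi=\mathbf e_a$ if $b=d$ and $a\notin\{s,d\}$; and $\phi=\0$ on every other token, including any edge incident to both $s$ and $d$, any self‑loop, and any positional/separator token. Choose $V_h=\I$ and $Q_hK_h^\top=\0$, so that at every position the attention logits vanish and the softmax is exactly the uniform distribution over the $N$ positions. The head output read at the final token is then $\vz=\tfrac1N\sum_{j=1}^N\phi(x_j)\in\Rset^{\abs{V}}$, and by construction $z_v=\tfrac1N c_v$ for every $v$; the case split on $\phi$ guarantees that the coordinates $z_s,z_d$ only ever receive contributions of size at most $1/N$, so they can never spuriously equal $2/N$. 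Finally set $\psi(\vx)=\vu^\top\sigma(\mW\vx+\vb)$ with $m'=\abs{V}$, $\mW=\I$, $\vb=-\tfrac{3}{2N}\mathbf 1$, $\vu=2N\,\mathbf 1$; then $\psi(\vz)=2N\sum_v\sigma\!\big(\tfrac1N c_v-\tfrac{3}{2N}\big)$ equals the number of length‑$2$ paths from $s$ to $d$, since each common neighbor contributes $2N\cdot\tfrac{1}{2N}=1$ and every other vertex contributes $0$. Thus the scalar output is $\ge 1$ on YES‑instances and $0$ on NO‑instances, which solves length‑$2$ path.

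For the precision claim: every coordinate of $\vz$ is a multiple of $1/N$, and the only constants the network stores or combines are $1/N$, $3/(2N)$, their difference $1/(2N)$, and the scale $2N$, all exactly representable with $O(\log N)$ bits; moreover the softmax over identically‑zero logits is exactly uniform, so no rounding error enters the attention step. Hence $f\in\text{Transformer}_{\abs{V},1,1}^{N}$ with $O(\log N)$‑bit precision, as required.

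The construction is essentially the whole content, so there is no hard obstacle in the usual sense; the only place that genuinely needs care is the bookkeeping around $s$ and $d$ on degenerate inputs (a direct edge between $s$ and $d$, or self‑loops at $s$ or $d$), which is exactly why $\phi$ is defined by the explicit case split above rather than a single uniform rule. Verifying $z_v=c_v/N$, evaluating $\psi$, and counting bits are then routine.
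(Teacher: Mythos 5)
Your construction is essentially the same as the paper's: uniform attention via $Q_hK_h^\top=\0$ and $V_h=\I$, an embedding that places each edge incident to $s$ or $d$ onto the standard basis vector of its ``free'' endpoint, and then a threshold to detect whether any coordinate reaches the value corresponding to a two-hop path. The paper states the same plan but leaves the final step as ``any thresholding function which achieves this separation suffices,'' and (apparently through a typo) writes the uniform attention weight as $1/\abs{V}$ rather than the correct $1/N$. You improve on both points: you give an explicit one-hidden-layer ReLU realization $\psi(\vz)=2N\sum_v\sigma(z_v-\tfrac{3}{2N})$ and you carry out the case analysis in the definition of $\phi$ so that the coordinates $z_s,z_d$ and degenerate edges (a direct $s$--$d$ edge, self-loops) provably cannot produce a false positive — a detail the paper's embedding is silent on. The precision accounting is also cleaner, since all stored constants are explicit multiples or reciprocals of $N$. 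This is a correct proof taking the same route, just with the bookkeeping the paper's sketch omits.
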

The proof relies on an encoding of the inputs where the output values only exceed a certain threshold when $u$ and $v$, the source and destination vertices, have edges with a common vertex. We defer the proof to Appendix \ref{app:proofs}. 

\paragraph{Parameter-matched comparison of dense and sparse depth-1 transformers.}
Using the lower-bound on width required for a sparse transformer (Theorem \ref{thm:cc_lowerbound}) and the upper-bound on width required for a dense transformer (Theorem \ref{thm:mem_ub}), we compare dense and sparse transformers when they have the same number of total parameters. We find that when the number of experts exceeds $(\log N)^2$, the sparse model is unable to solve the same task as the dense model.
\begin{corollary}
    Consider a sparse transformer (with $K$ experts) and a dense transformer with the same number of parameters. There exists a number of experts $K$ so that the the sparse model is not able to solve the reasoning task, but the dense transformer solves the task.
\end{corollary}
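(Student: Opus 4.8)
\section*{Proof proposal}

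The plan is a parameter-counting argument that plays the MoE width lower bound (Theorem~\ref{thm:cc_lowerbound}) against the dense-transformer width upper bound (Theorem~\ref{thm:mem_ub}). First I would fix the comparison. Let $f_{\mathrm{dense}}$ be the dense transformer of Theorem~\ref{thm:mem_ub}: it has model width $m_d = \abs{V}$, a single head, $O(\log N)$-bit precision, and, under the standard convention that the MLP hidden width $m'$ scales with $m$, a total parameter count $P_{\mathrm{dense}} = \Theta(\abs{V}^2)$, coming from its $\Theta(m_d^2)$ attention parameters and its $\Theta(m_d m') = \Theta(m_d^2)$ parameters in $\psi$. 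Now let $f_{\mathrm{sparse}} \in \text{Transformer}_{m_s, H_s, 1, K}^N$ be any sparse transformer with the same total parameter count $P_{\mathrm{sparse}} = P_{\mathrm{dense}}$, again with per-expert MLP hidden width $m'_s = \Theta(m_s)$.

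Next I would count the parameters of $f_{\mathrm{sparse}}$: the $H_s$ attention heads contribute $\Theta(H_s m_s^2)$, and the $K$ experts contribute $\Theta(K m_s m'_s) = \Theta(K m_s^2)$ (each $\mW_i \in \Rset^{m'_s \times m_s}$), with the routing vectors $\br_1,\dots,\br_K \in \Rset^{m_s}$ contributing a lower-order term. Hence $P_{\mathrm{sparse}} = \Theta\paren*{(H_s + K)\, m_s^2}$. Equating with $P_{\mathrm{dense}} = \Theta(\abs{V}^2)$ yields $m_s = \Theta\paren*{\abs{V}/\sqrt{H_s + K}}$, and therefore $m_s H_s = \Theta\paren*{\abs{V}\, H_s/\sqrt{H_s + K}}$. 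On the other hand, Theorem~\ref{thm:cc_lowerbound} says that if $f_{\mathrm{sparse}}$ solves the length-2 path problem on inputs that respect the prescribed source/destination split, then $m_s H_s = \Omega(\abs{V}/\log N)$. Combining the two, a parameter-matched sparse model that also solves the task must satisfy $H_s/\sqrt{H_s + K} = \Omega(1/\log N)$, i.e.\ $K = O\paren*{H_s^2 (\log N)^2}$.

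To finish, I would take $K$ large enough to violate this. In the standard regime where the number of heads is a fixed constant (in particular, matching the single-head dense model of Theorem~\ref{thm:mem_ub}), choosing any $K \ge C(\log N)^2$ for a suitable absolute constant $C$ forces $m_s H_s = o(\abs{V}/\log N)$, so by Theorem~\ref{thm:cc_lowerbound} no parameter-matched sparse transformer with $K$ experts can solve length-2 path, while $f_{\mathrm{dense}}$ does. Since graph connectivity, shortest path, and cycle detection all reduce to length-2 path (Lemma~\ref{lemma:SD_L2path} and \citep{sanford2024understanding}), the same $K$ witnesses the claimed separation on those reasoning tasks as well.

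The main obstacle is not a deep idea but getting the parameter bookkeeping honest. One has to commit to a fixed convention relating the MLP hidden width $m'$ to the model width $m$ for \emph{both} architectures — otherwise a degenerate MoE with $m' = O(1)$ could pour its budget into a wide attention block and escape the lower bound — and one has to confirm that the embedding $\phi$ and the routing parameters are genuinely not the dominant contribution in the count (or at least are accounted for consistently on both sides). Once the dominant terms are pinned down as $\Theta((H_s + K) m_s^2)$ on the sparse side and $\Theta(\abs{V}^2)$ on the dense side, the conclusion is just the two-inequality squeeze above.
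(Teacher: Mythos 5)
Your proof takes essentially the same route as the paper's: match the dense and sparse parameter counts to get $m_s \approx m_d/\sqrt{K}$, plug $m_d=\abs{V}$ from Theorem~\ref{thm:mem_ub}, and invoke the width lower bound of Theorem~\ref{thm:cc_lowerbound} to conclude that any $K = \Omega((\log N)^2)$ makes the sparse model too narrow. Your bookkeeping is a bit more explicit than the paper's (you carry the sparse head count $H_s$ through the inequality and flag the need to fix a convention relating MLP hidden width to model width, both of which the paper glosses over by implicitly treating the head count as constant), but the key comparison and the resulting threshold on $K$ are the same.
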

\begin{proof}
Suppose we have two depth-$1$ transformers, where one is a dense model and the other is a mixture of experts with $K$ experts. Let the width of the dense model be $m_d$, and the width of the sparse model be $m_s$. The number of parameters in the dense model is $O(m_d^2)$ and the number of parameters in the sparse model is $O(Km_s^2)$. In order to match the number of parameters, it must be the case that $m_s = \frac{m_d}{\sqrt{K}}$.
Suppose we let $m_d = \abs{V}$, as this is sufficient to solve the above problems. For any $K \geq \Omega \paren*{(\log N)^2}$, the sparse model is not sufficiently wide to solve the problem.     
\end{proof}

\subsection{MoEs use their experts to solve memory-intensive tasks}


In this section, we provide an upper-bound on the number of parameters necessary for a sparse transformer to solve memorization tasks, followed by a lower-bound on the number of parameters needed for a dense transformer to solve the same task. We use these results to compare the memorization capabilities of dense and sparse transformers with the same number of active parameters. We find that with enough experts, the sparse transformer is able to solve memorization tasks with less active parameters than the dense transformer. In both bounds we assume that transformer has logarithmic number of bits to encode each parameter.

We consider  sequences $\{(X^i, y_i)\}_{i=1}^n$ where $X^i \in \Rset^{N \times m}$ are input sequences of length $N$ in dimension $m$ such that $X^i[j]$ is sampled from a Gaussian distribution $\mathcal{N}(0,I_m)$. We assume $y_1, \dots, y_N \in \{\pm \1\}$ are arbitrary labels for the $n$ sequences. The objective is for a transformer to memorize these sequences, i.e. map each input $X^i$ to a label $y_i$. The classification is determined by the sign of the last token output.

\paragraph{Upper-bound on  MoE for memorization.}
We begin by showing that, with high probability over the choice of the inputs, the MoE architecture can memorize (i.e., arbitrarily label the examples), with a small number of active parameters.

\begin{theorem}\label{thm:moe_transformer}
With probability at least $0.99$, there exists a one-layer MoE transformer with \( K \) experts, using \( \tilde O\left( \dfrac{n}{K} +Km\right) \)  active parameters and \(\tilde O\left(n+Km\right)\) total parameters stored in $\tilde{O}(1)$ bits that, when applied to each sequence \( X^i \), outputs at the last token a value whose sign matches \( y_i \), i.e.,
$\operatorname{sign}(f(X_i)) = y_i \quad \text{for all } i = 1, \dots, n.$\footnote{We use $\tilde{O}$ and $\tilde{\Omega}$ to hide logarithmic factors.}
\end{theorem}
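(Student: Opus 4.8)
The plan is to build the MoE transformer in two stages: an attention layer that pools each input sequence $X^i$ into a single summary vector, and a sparse MLP (the experts) that memorizes the map from summary vectors to labels. First, since the entries of $X^i$ are i.i.d.\ $\mathcal N(0,I_m)$, I would use the attention layer to compute (approximately) the average of the token embeddings, $\bar x^i := \frac1N\sum_{j} X^i[j]$, at the last token; with $H=1$ head and near-uniform attention weights (achievable by taking $Q_h,K_h$ small so the softmax is flat), the last-token output is close to $\bar x^i$ up to the $O(\log N)$-bit rounding error. The key probabilistic fact is that these $n$ pooled vectors $\bar x^1,\dots,\bar x^n \in \Rset^m$ are themselves i.i.d.\ $\mathcal N(0, \tfrac1N I_m)$, hence (after rescaling by $\sqrt N$) a generic collection of $n$ Gaussian vectors in $\Rset^m$. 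With probability at least $0.99$ these vectors are in "general position" in the strong quantitative sense we need — pairwise well-separated, and more importantly admitting a good routing partition (see next step). This is where the $n$ Gaussians assumption is used; it lets us avoid adversarially clustered inputs.

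Second, I would route the $n$ pooled vectors into the $K$ experts so that each expert is responsible for roughly $n/K$ of them. Concretely, pick the routing directions $\br_1,\dots,\br_K$ so that $\argmax_j \br_j^\top \bar x^i$ induces a balanced partition $S_1,\dots,S_K$ of $[n]$ with $|S_\ell| = \tilde O(n/K)$; a clean way is to draw the $\br_j$ randomly (or derive them from a random projection to $O(\log n)$ dimensions followed by a bucketing) and invoke a balls-into-bins/anti-concentration argument for Gaussian vectors to get balance with the claimed probability. Then within each expert $\ell$, we only need an MLP on $\Rset^m$ that correctly labels the $|S_\ell| = \tilde O(n/K)$ points assigned to it. Invoking the standard memorization capacity of a one-hidden-layer ReLU network — a width-$w$ ReLU MLP can fit $w$ arbitrary binary labels on points in general position (this is exactly the dense lower-bound-matching construction, cf.\ the references \citet{kim2023provable,mahdavi2023memorization} cited in the paper, and is also the content of the dense theorem's converse) — each expert needs hidden width $m'_\ell = \tilde O(n/K)$. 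So each expert has $\tilde O((n/K)\cdot m)$ parameters, giving active parameter count $\tilde O(n/K + Km)$ once we also count the $K$ routing vectors (the $+Km$ term) — wait, more precisely the active count is the size of one expert plus the router, $\tilde O\big(\frac{n}{K}\cdot m\big)$ for the expert weights $\mW_i \in \Rset^{m'\times m}$; to land on the stated $\tilde O(n/K + Km)$ one works with the pooled representation compressed to $\tilde O(1)$ effective coordinates (e.g.\ project $\bar x^i$ down via a random $\tilde O(\log n)\times m$ matrix folded into $Q_h K_h^\top$ or into the first expert layer, so the per-expert matrix is $\tilde O(n/K)\times \tilde O(1)$, i.e.\ $\tilde O(n/K)$ parameters, and the $Km$ comes from the $K$ routers $\br_j\in\Rset^m$). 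The total parameter count is then $K$ experts of size $\tilde O(n/K)$ plus $\tilde O(Km)$ routers, i.e.\ $\tilde O(n + Km)$, and all weights are representable in $\tilde O(1)$ bits because the construction only uses the (rounded) Gaussian coordinates and small integer combinations thereof.

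The last step is to track the bit-precision and make sure the approximate pooling does not destroy the memorization: I would show the rounding error in $\bar x^i$ is small enough that the separating hyperplanes used inside each expert (and the routing boundaries) have margin bounded away from the error, again using Gaussian anti-concentration to lower-bound all the relevant margins by $\mathrm{poly}(1/n)$ simultaneously with probability $0.99$; since $O(\log N)$ (equivalently $\tilde O(1)$) bits of precision resolve $\mathrm{poly}(n)$-scale quantities, the signs are all computed correctly.

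\textbf{Main obstacle.} The crux is the joint routing-and-capacity argument: I need the random routing to simultaneously (i) balance the load to $\tilde O(n/K)$ per expert and (ii) leave each expert's assigned point set in sufficiently general position (with quantitative margins) for the width-$\tilde O(n/K)$ ReLU memorization lemma to apply, and (iii) have the routing boundaries themselves be margin-robust to the $\tilde O(1)$-bit pooling error — all holding together with probability $0.99$. Controlling the interaction between the data-dependent routing and the per-expert memorization (rather than treating them as independent) is the delicate part; everything else (uniform attention for averaging, Gaussian concentration of $\bar x^i$, bit-counting) is routine.
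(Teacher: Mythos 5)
Your high-level decomposition (attention pools each sequence to a Gaussian vector, routing balances load across experts, each expert memorizes its share) matches the paper's. But you explicitly flag the routing/memorization interaction as the unresolved ``main obstacle,'' and that is precisely where the paper's construction does something you don't have: it sets $m = m_0 + \log K$ and routes only on the first $\log K$ coordinates (taking $\br_i = (\bv_i, 0,\dots,0)$ with $\bv_i \in \{\pm1\}^{\log K}$), while each expert's MLP acts only on the remaining $m - \log K \ge m/2$ coordinates. By Gaussian independence across coordinates, conditioning on the routing outcome leaves the coordinates the expert sees exactly i.i.d.\ $\mathcal N(0,1)$, so the per-expert memorization lemma applies without any conditioning correction. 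Your proposal, with data-dependent random routing vectors acting on the full vector, does not achieve this decoupling, and that gap is real, not a formality.

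The second substantive difference is the memorization lemma. You invoke the generic ``a width-$w$ ReLU net fits $w$ generic points,'' which gives per-expert width $\tilde O(n/K)$ and hence first-layer parameter count $\tilde O((n/K)\cdot m)$ — as you yourself note, a factor of $m$ too large. Your repair (randomly project to $\tilde O(\log n)$ dimensions so the per-expert matrix is $\tilde O(n/K)\times\tilde O(1)$) is plausible but is not what the paper does. The paper instead uses the result of \cite{daniely2020memorizing}: a width-$q$ depth-2 net in dimension $M$ can memorize $\Theta(Mq/\log^4 M)$ Gaussian samples, so each expert needs width only $\tilde O\big(\tfrac{n}{mK}\big)$ and thus $\tilde O(n/K)$ parameters directly in dimension $m$, with no projection step. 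That result exploits the Gaussian structure in a way the generic general-position bound does not, and it is also what makes the decoupling in the previous paragraph pay off (after conditioning, the usable coordinates are still Gaussian, so Daniely's theorem applies verbatim). If you pursue the projection route instead, you also need to verify that the projection can be realized inside the architecture and that it interacts benignly with the (already data-dependent) routing — exactly the interaction you acknowledged you had not controlled.

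Finally, one smaller point: the paper's load-balancing is a simple Hoeffding bound (each expert receives at most $2n/K$ samples w.h.p.) against a \emph{fixed} sign-based router, which trivially routes each Gaussian uniformly to the $K$ orthants of $\{\pm1\}^{\log K}$. Your random-router/balls-into-bins sketch would need an anti-concentration argument you do not supply, and it is strictly more work than the deterministic construction.
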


Specifically, if we choose $K = \sqrt{n/m}$ we get that an MoE architecture can solve the memorization problem with $\tilde O(\sqrt{nm})$ parameters. To prove this, we show that for a particular routing function, the number of samples routed to each expert is approximately $n/K$. Then, we show that an expert with $\tilde{O}(n/{mK})$ neurons can memorize a sample of size $O(n/K)$. We present the proof in Appendix \ref{appc:memorization}.

\paragraph{Lower bound on memorization with dense Transformer.}
Next, we give a lower-bound on the number of parameters for a dense transformer to perform memorization.
\begin{theorem}[Lower bound for dense model]
    \label{thm:lower_bound_memorization}
    Given the same task as above, a dense Transformer requires $\tilde{\Omega}(n)$ parameters to solve the memorization task.
\end{theorem}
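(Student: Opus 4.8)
The plan is to lower-bound the number of parameters via an information-theoretic counting argument: a dense transformer with $P$ parameters, each stored in $\tilde O(1)$ bits, can be described by $\tilde O(P)$ bits, hence the class of functions it realizes has at most $2^{\tilde O(P)}$ members (after discretization of the precision). On the other hand, the memorization task requires the model to realize all $2^n$ possible labelings $y \in \{\pm 1\}^n$ of the $n$ fixed Gaussian input sequences $X^1,\dots,X^n$, at least with high probability over the draw of the inputs. If two labelings $y \neq y'$ differ in coordinate $i$, then any transformer that memorizes $y$ and any transformer that memorizes $y'$ must differ as functions (they disagree on $X^i$), so the map from labelings to memorizing-parameter-settings is injective on its domain. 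Therefore $2^{\tilde O(P)} \geq 2^n$, giving $P = \tilde\Omega(n)$.

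The steps, in order, would be: (1) Fix the precision model — each of the $P$ real parameters is quantized to $O(\log(\text{something polynomial}))$ bits, so the whole parameter vector lives in a set of size $2^{\tilde O(P)}$; the attention softmax and ReLU are deterministic given the quantized parameters, so the realized function $X \mapsto \operatorname{sign}(f(X))$ is determined by the parameter vector. (2) Argue that with probability at least (say) $0.99$ over the Gaussian draw, the $n$ sequences $X^1,\dots,X^n$ are distinct (in fact ``generic'' — almost surely no two coincide), so that requiring $\operatorname{sign}(f(X^i)) = y_i$ for all $i$ is a nontrivial constraint pinning down at least one bit of output per example. (3) Count: the number of distinct sign-patterns $(\operatorname{sign}(f(X^1)),\dots,\operatorname{sign}(f(X^n)))$ achievable as $f$ ranges over the quantized class is at most $2^{\tilde O(P)}$; but memorization demands all $2^n$ patterns be achievable; hence $\tilde O(P) \geq n$, i.e. $P = \tilde\Omega(n)$. (4) Translate ``width/architecture'' into ``parameters'': since the dense model's parameter count is $\Theta(m^2 + m'm)$ plus embedding, the bound $P = \tilde\Omega(n)$ is exactly the claimed statement.

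The main obstacle is step (1)–(3) made rigorous with the logarithmic-precision assumption: one must be careful that ``logarithmic bits'' is with respect to quantities (like $n$, $m$, $N$) that keep the counting bound at $2^{\tilde O(P)}$ rather than $2^{\operatorname{poly}(P)}$, and that the embedding $\phi$ and the quantities inside the softmax are themselves representable in $\tilde O(P)$ bits — otherwise the function class is not finite and the counting argument collapses. A clean way to handle this is to invoke a standard fact (e.g. in the style of the memorization lower bounds of \cite{kim2023provable, madden2024upper}) that a neural network / transformer with $P$ parameters at $b$ bits of precision shatters at most $\tilde O(Pb)$ points; applying it with $b = \tilde O(1)$ yields $n \leq \tilde O(P)$ directly. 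The only MoE-specific wrinkle is absent here since this theorem concerns the \emph{dense} model, so the argument is the textbook pseudodimension/description-length bound; the one thing to double-check is that the ``with probability $0.99$'' in the matching upper bound (Theorem~\ref{thm:moe_transformer}) and the ``$n$ arbitrary labels'' here are over the \emph{same} input distribution, so the two bounds genuinely sandwich the parameter complexity at $\tilde\Theta(n)$ for this task — though strictly for the lower bound we only need that \emph{some} draw of inputs forces $2^n$ labelings, which holds almost surely.
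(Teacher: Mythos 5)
Your proposal is correct and follows essentially the same route as the paper's proof: a description-length counting argument, bounding the size of the finite function class realizable with $W$ parameters at $c = \tilde O(1)$ bits per parameter by $2^{cW}$, and comparing it against the $2^n$ labelings that memorization must realize to conclude $W \geq n/c = \tilde\Omega(n)$. Your extra caveats about input distinctness and the precision model are sensible sanity checks but not steps the paper needed to spell out, since the task definition already fixes the $n$ inputs and demands that all $2^n$ labelings be achievable.
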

This bound follows from the fact that there are $2^n$ possible labels for any fixed set of $n$ inputs, and at most $2^{cW}$ functions with $W$ parameters and $c$ bit per parameters. The proof is in Appendix \ref{appc:memorization}.

\paragraph{Separation between MoEs and Dense Models.}
Observe that the previous results on memorization imply a separation between MoEs and dense models in terms of the number of active parameters. Namely, we show that an MoE with $\tilde O(\sqrt{nm})$ active parameters can memorize, while a dense model requires $\tilde{\Omega}(n)$ parameters. So, for $n \gg m$, MoEs are significantly more efficient. Comparing the number of total parameters, MoEs require $\tilde O(n + Km)$ parameters, so both MoE and dense models have linear dependence on $n$ in the total parameter count.

\section{Synthetic experiments}\label{sec:synthetic}

In the previous section, we proved that there exist graph connectivity problems that cannot be solved by any number of experts of a certain width but the same task can be solved by a dense model with a slightly larger width. Our goal in this section is to verify that our theoretical analysis bears out experimentally when training models from scratch on synthetic data, before moving on to study pre-trained models in \autoref{sec:pretrain}. We mainly focus on two tasks: the \emph{shortest path} problem (\autoref{fig:shortest_path}), which we use as a synthetic task to represent reasoning problems, and the \emph{phone-book} task (\autoref{fig:phonebook}), to measure the recall ability of our models. Our experiments in this section highlight that adding experts yields greater performance improvements on memorization tasks than reasoning tasks.

\begin{figure}[h]

\begin{subfigure}[t]{0.75\textwidth}

\includegraphics[width=1.\linewidth]{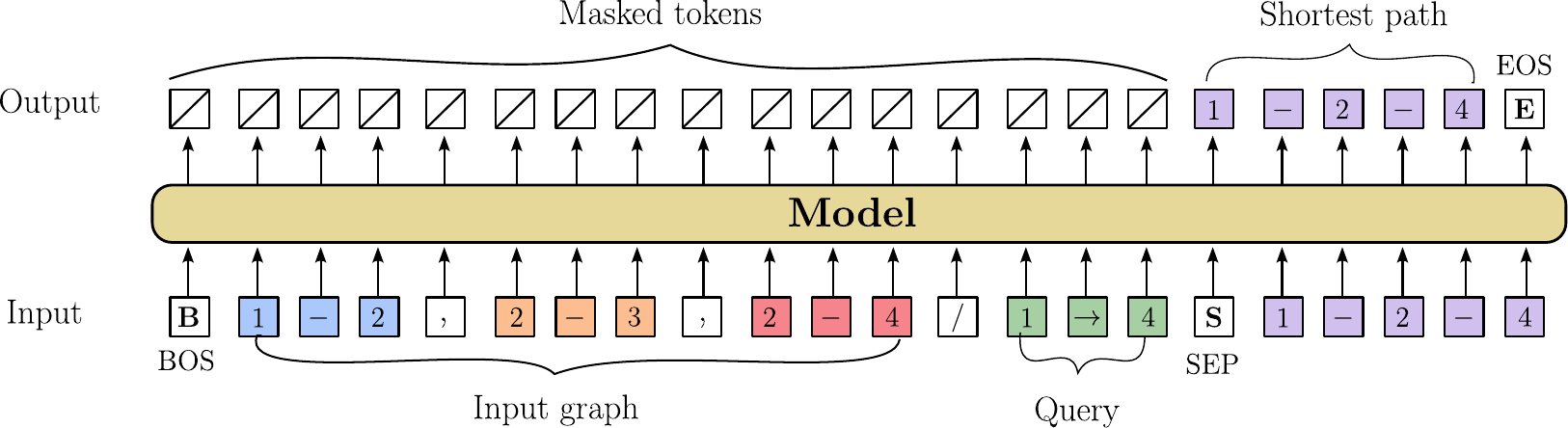}
\end{subfigure}
\begin{subfigure}[t]{0.24\textwidth}

\hspace*{.8cm}\includegraphics[height=2.52cm]{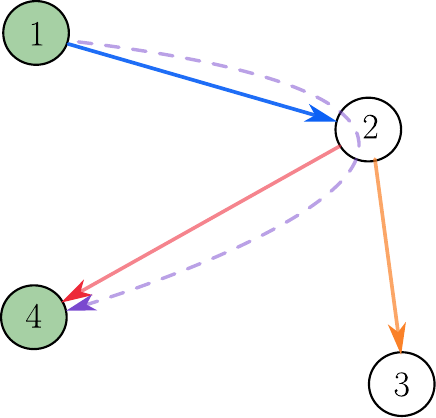}
\end{subfigure}
\vspace{-0.2cm}
\caption{\small Illustration of the shortest path task. We feed the model with a sequence that lists all the edges in the input graph and ends with the query (in green) which asks the model to find a shortest path between two vertices (from vertex 1 to vertex 4 in the figure). The model then autoregressively returns the shortest path (in purple).   }
\vspace{-0.2cm}
\label{fig:shortest_path}
\end{figure}

\subsection{Experimental setup}

\paragraph{Architecture.} We opt for the Mistral \citep{jiang2023mistral} and Mixtral \citep{jiang2024mixtral} architectures as the backbones of our Transformer and MoE models, respectively. The two architectures are identical and only differ by the addition of a gating module and multiple experts in Mixtral. For both model types, we fix the number of layers to $L=12$. For the dense transformers, we vary model size by sweeping the width $d\in\{256,512,1024\}$. For MoEs, we sweep over widths $d\in\{256,512\}$ and the number of experts $E\in\{8,16,32,64\}$. To be consistent with our experiments in \autoref{sec:pretrain}, we set the intermediate dimension in the FFN block to be equal to $d$ (and not $4d$). We use token-choice routing, do not apply any token dropping and each token is routed to the top-2 experts. Lastly, in both this section and \autoref{sec:pretrain}, we report for each model the number of non-embedding parameters which we refer to as the total number of parameters.

\paragraph{Shortest path task.} For a graph with $n$ vertices, our token space $\mathcal{V}$ is of size $n+6$  with tokens encoding the vertices and some special tokens: $\mathcal{V}=\{1,\dots,n,\EDGE,\BOS,\EOS,\PAD,\SEP,/\}$ where $\BOS$ is the beginning of sentence token, $\EOS$ the end of sentence token, $\PAD$ the padding token, $\EDGE$ is the token indicating an edge between two vertices and, $\SEP$ and ``/'' are  separator tokens. Each sequences describes the graph by a list of all the edges followed by two randomly sampled vertices and the shortest path between these latter (see \autoref{fig:shortest_path}). All the graphs are directed and sampled according to the Erd\"os-R\'{e}nyi model, with $n$ vertices and probability $p$ for each edge to exist. We vary $n\in\{25,30,50,40,45,50,55\}$ and set $p$ such that the average length of the shortest path is 3.5. Each train/test pair corresponds to \emph{one} value of $(n,p)$, we do not mix graph sizes.

\begin{figure}[t]

\vspace{-0.5cm}
\begin{subfigure}[t]{0.55\textwidth}
\includegraphics[width=1.\linewidth]{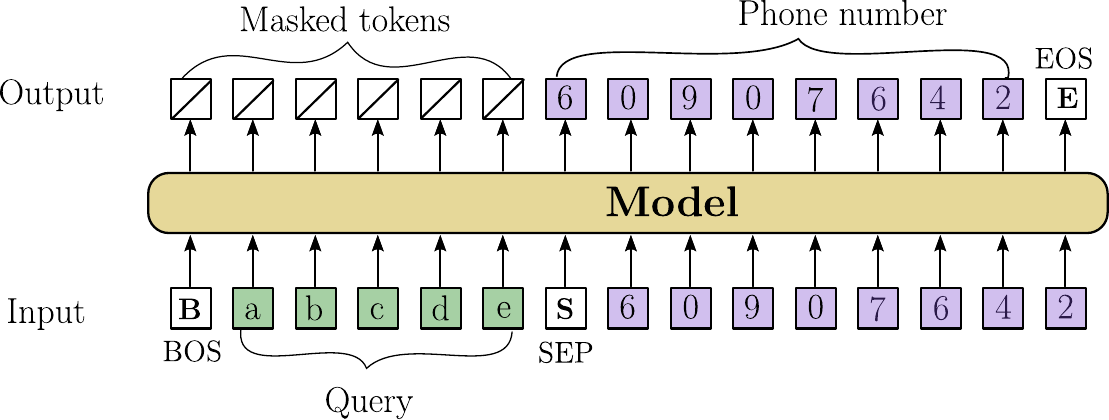}
\end{subfigure}
\hfill
\begin{subfigure}[t]{0.4\textwidth}
\vspace{-3.2cm}
\centering
\includegraphics[height=2.8cm]{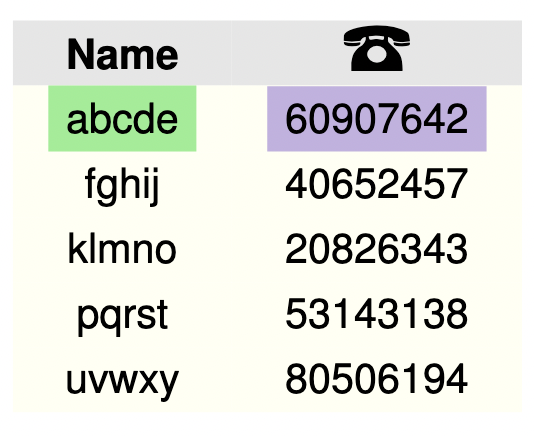}
\end{subfigure}
\vspace{-0.2cm}
\caption{\small Illustration of the phone-book task for closed-book retrieval. The model is first trained to memorize a phone-book (illustrated on the right). Then, we randomly select a name in the phone-book (in green) and ask the model to return their phone number (in purple) without access to the phone-book.}
\vspace{-0.2cm}
\label{fig:phonebook}
\end{figure}

\begin{figure}[t]

\begin{subfigure}[t]{0.40\textwidth}

\centering
\includegraphics[width=1\linewidth]{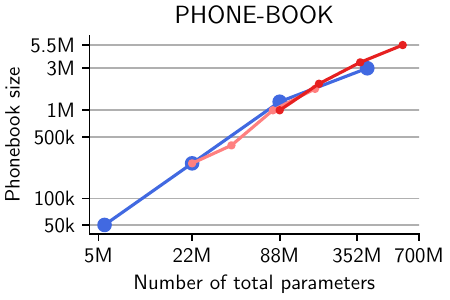}
\small 
\centering
    \vspace*{-.2cm}
        \mbox{\hspace*{1.cm}\cblock{65}{105}{225}\hspace{1mm}Dense transformer \hspace{1.5mm} \cblock{255}{128}{ 128}\hspace{1mm}MoE (10M active parameters) \hspace{1.5mm} \cblock{230}{32}{ 32}\hspace{1mm}MoE (42M active parameters)} 

\vspace{.25cm}

\caption{Phone-book memorization}\label{fig:phonebook_total_params}
\end{subfigure}\hfill
\begin{subfigure}[t]{0.40\textwidth}

\centering

\includegraphics[width=1\linewidth]{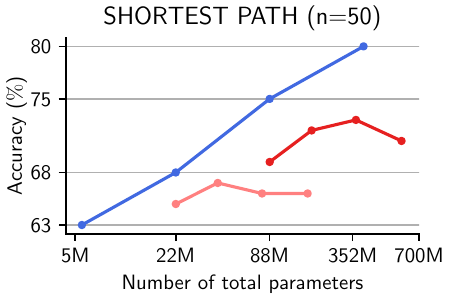}

\vspace{.50cm}

\captionsetup{width=1.2\columnwidth}
\caption{\small Shortest path (50-node graphs)}\label{fig:shortest_path_n50_total_params}
\end{subfigure}
 
\vspace{-0.2cm}
\caption{\small\textbf{(a) Phone-book memorization}: We train a series of dense transformers and MoEs on phone-books of varying sizes and then evaluate their memorization capacity. We report the maximal phone-book size where the model obtains more than 90\% accuracy. The maximal phone-book size correlates with the total (and not active) number of parameters. \textbf{(b) Shortest path (total parameters)}: We train models to find the shortest path in 50-node graphs and report the test accuracy. Here, increasing the number of experts provides limited improvements and the performance rather correlates with the number of active parameters.} 
\vspace{-0.2cm}
\label{fig:synthetic_total_param_exp}
\end{figure}

\paragraph{Phone-book task.}  Our token space $\mathcal{V}$ is of size 39 and made of the alphabet letters, digits and special tokens: $\mathcal{V}=\{a,\dots,z,0,\dots,9,\BOS,\EOS,\SEP\}$. We generate phone-books where the names consist of 5 letters and the phone numbers of 8 digits (see \autoref{fig:phonebook}). We ensure that both the names and numbers are unique.

\paragraph{Datasets.} For the graph experiments, the training set size is $1\mathrm{e}6$ and the test set consists of $1\mathrm{e}3$ held-out examples that are sampled from the \emph{same} distribution as the training examples. For the phone-book experiments, we vary the training set size over $\{1\mathrm{e}{5},5\mathrm{e}{5},1\mathrm{e}{6},1.5\mathrm{e}{6},2\mathrm{e}{6},2.5\mathrm{e}{6},3\mathrm{e}{6}\}$ and the test set consists of $1\mathrm{e}3$ queries from the training set. 

\paragraph{Optimization.} We use the AdamW optimizer \citep{loshchilov2017fixing} with a weight decay equal to 0.1. We sweep the learning rate over $\{5\mathrm{e}{-5},1\mathrm{e}{-4},5\mathrm{e}{-4},1\mathrm{e}{-3}\}$, the number of epochs over $\{2,5,10,15\}$, and set the maximal possible batch size among $\{8,16,32\}$. We use a warmup during the 20\% first  training steps and a linear decay scheduler. All models are trained by next-token prediction. In the graph task, we apply a mask on the input instance so that we only penalize the model whenever it makes a mistake on the labels (and not on the inputs and labels jointly). In the phone-book experiment, we do not apply any masking.

\paragraph{Evaluation.} For each task we compute the exact accuracy, i.e.\ we count the generation as correct only if it fully matches the ground truth. For the phone-book task, we report the size of the maximal phone-book where we observe at least 90\% exact accuracy.

\subsection{Memorization: total parameters predict performance}

We train dense transformers and MoEs on phone-books of different sizes and at test time, evaluate whether they memorize the phone number of some names. 
\autoref{fig:phonebook_total_params} reports the maximal phone-book size where the model manages to get an accuracy greater than $90\%$. This gives us an estimate of the memorization capacity of the model.

The findings are clear: no matter the number of active parameters, MoEs match the performance of dense transformers with the same number of total parameters. This suggests that MoEs are able to effectively leverage the extra parameters in additional experts by routing tokens to the experts that contain the necessary information from the training corpus. 
This scaling is remarkable in this case since it even holds when we are only routing to 2 out of 64 experts. For instance, we find that an MoE model with only 42M active parameters outperforms a dense model with 10x as many parameters. 
This type of impressively efficient memorization capacity may be a major reason behind the success of MoE architectures.

\subsection{Reasoning: total parameters do not  predict performance} 

We train dense transformers and MoEs on the shortest path task and then query the models to find the shortest paths in novel, held-out graphs. \autoref{fig:shortest_path_n50_total_params} reports the performance on graphs with 50 nodes with respect to their number of total parameters. Contrary to the phone-book experiment, increasing the number of experts does not consistently improve the performance of MoEs. 
Essentially, we find that active parameters rather than total parameters is a better predictor of performance for these reasoning tasks. 

To connect back to the theory from \autoref{sec:theory}, note that active parameters is directly determined by the width of the network since we always route to exactly 2 experts and fix the depth. Thus, these results corroborate the theory by showing that width (i.e. active parameters) determines the performance on these graph reasoning problems and that increasing the number of experts is not helpful. 
In \autoref{sec:pretrain}, we will further corroborate this idea through evaluation of pre-trained models on commonsense and math reasoning benchmarks.

\section{Pre-trained Models}\label{sec:pretrain}

In this section, we pre-train dense transformers and MoEs and compare their performance on standard math and natural language benchmarks. 
We break the downstream tasks into those that require more memorization and those that require more reasoning. Here, memorization refers to recall in that we measure the ability of the models to retrieve real-world facts,
The memorization-intensive tasks test for ``world knowledge'' and consist of benchmarks like TriviaQA \citep{joshi2017triviaqa}.
We break the reasoning-intensive tasks into two subcategories: one for natural language reasoning tasks like WinoGrande \citep{sakaguchi2021winogrande} and another for mathematical reasoning tasks like Hendrycks-MATH \citep{hendrycks2021measuring}. 
These tasks may be seen as real-world analogs of the stylized phone-book and shortest path tasks studied in \autoref{sec:synthetic}. 

We observe that performance on world-knowledge tasks is governed by the total number of parameters while performance on reasoning tasks depends more on the number of active parameters (\autoref{fig:introduction}). 
Additionally, we conduct an experiment that indicates memorization from MoEs may be harming reasoning performance since there is a larger gap between train and test accuracy for MoEs than dense models at fixed total parameters (\autoref{fig:generalization_gap}).
Finally, we conduct an ablation where we compare models at fixed validation perplexity rather than model size. We find that MoEs perform better on world knowledge tasks and similarly on reasoning tasks compared to dense models (\autoref{fig:iso_ppl_math_nlp}).

\subsection{Setup}

\paragraph{Architecture.} We train dense transformers and MoEs using the OLMoE
codebase \citep{muennighoff2024olmoeopenmixtureofexpertslanguage}. We set the number of layers $L=20$ and vary the width $d\in\{256,512,1024,2048,4096\}$ for dense transformers and $d\in\{256,512,1024\}$ for MoEs. Similarly to \cite{muennighoff2024olmoeopenmixtureofexpertslanguage}, we consistently set the intermediate dimension in the FFN/MoE blocks to $d$ (and not $4d$). For MoEs, we vary the number of experts  $E\in \{8,16,32,64\}$. For the specific case of width 256, we 
 also train a MoE with 256 experts because its parameter count approximately matches the one of a width-2048 dense model and thus, we can compare the downstream performance of the two models. We use top-2 token-choice routing, without token dropping which is implemented in the dMoE function from  the Megablocks package \citep{gale2023megablocks}. We leave the study of MoEs trained with other routing mechanisms for future work.

\paragraph{Training hyperparameters.} We use the AdamW optimizer \citep{loshchilov2017fixing} with a weight decay equal to 0.1. We set the learning rate to $0.001$, train on 63B tokens (60k steps) with batch size 512 and sequence length of 2048. We use warmup during the 20\% first training steps and a linear decay scheduler. We train our models using FSDP \citep{zhao2023pytorch}.

\paragraph{Pre-training datasets.} We train two collections of models, one collection on natural language and another one on math. The natural language dataset is a mixture  constituted of FineWeb-edu \citep{penedo2024fineweb}, Cosmopedia \citep{benallal2024cosmopedia}, Wikipedia and the training sets of the downstream tasks we evaluate on. The math dataset is a mixture made of Proof-Pile 2 \citep{azerbayev2023llemma} and instruction datasets such as OpenMathInstruct \citep{toshniwal2024openmathinstruct} and MetaMathQA \citep{yu2023metamath}. Each of the two training mixture approximately totals 65B tokens. A precise description of the training mixtures can be found in \autoref{app:train_data}.

 \paragraph{Evaluation.} We measure the validation perplexity on 5,000 held-out sequences sampled from the training distribution. And we evaluate our models on a series of natural language and math benchmarks. Explicitly, we divide them into three categories: 
 \begin{itemize}[leftmargin=*, itemsep=1pt, topsep=1pt, parsep=1pt]
      \item[--] World-knowledge tasks: TriviaQA \citep{joshi2017triviaqa}, Natural Questions \citep{kwiatkowski2019natural}, HotpotQA \citep{yang2018hotpotqa}, WebQuestions \citep{berant2013semantic}, ComplexWebQuestions \citep{talmor2018web}.
      \item[--] Commonsense tasks: ARC-C and ARC-E \citep{clark2018think}, CommonsenseQA \citep{talmor2018commonsenseqa}, HellaSwag \citep{zellers2019hellaswag}, OpenbookQA \citep{mihaylov2018can}, PIQA \citep{bisk2020piqa}, SciQ \citep{welbl2017crowdsourcing}, SIQA \citep{sap2019socialiqa}, WinoGrande \citep{sakaguchi2021winogrande}.
      \item[--] Math: SVAMP \citep{patel2021nlp}, GSM8k \citep{cobbe2021training}, GSM-Hard \citep{gao2023pal}, Hendrycks-MATH  \citep{hendrycks2021measuring}, Minerva-MATH \citep{lewkowycz2022solving}.
 \end{itemize}

\begin{wrapfigure}[23]{r}{5.5cm}
\vspace*{-.6cm}
 
\centering
\begin{subfigure}[t]{0.30\textwidth}

\hspace*{-.4cm}\includegraphics[height=3.5cm]{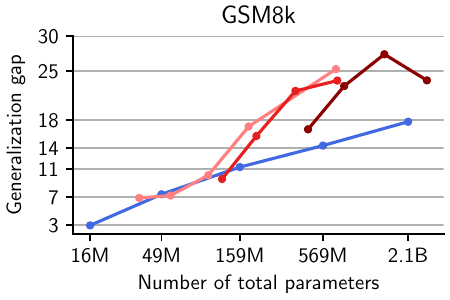}

\vspace{-.1cm}\caption*{(a)}

\end{subfigure}

\vspace{.1cm}

\begin{subfigure}[t]{0.30\textwidth}

\hspace*{-.4cm}\includegraphics[height=3.5cm]{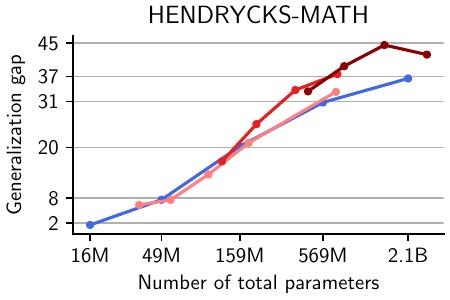}
\caption*{(b)}
\end{subfigure}
 \vspace*{-.2cm}
  \scriptsize  
        \mbox{\hspace*{-.0cm}\cblock{65}{105}{225}\hspace{1mm}Dense transformer}

        \vspace{.2cm}
        
        \mbox{\hspace*{1.1cm}\cblock{255}{128}{ 128}\hspace{1mm}MoE (18M active parameters)}

        \mbox{\hspace*{1.1cm}\cblock{230}{32}{ 32}\hspace{1mm}MoE (58M active parameters)}

        \vspace{.0cm}
        
        \mbox{\hspace*{1.2cm}\cblock{139}{0}{ 0}\hspace{1mm}MoE (200M active parameters)}

\vspace{-.3cm}
        
\caption{\small Generalization gap  when the test set is GSM8k (a) and Hendrycks-MATH (b).} 
\vspace{-0.cm}
\label{fig:generalization_gap}
\end{wrapfigure}

In all our experiments, we plot the average accuracy for each of these three categories. We report the corresponding per-task performance in \autoref{app:experiments}.

\subsection{Results}

\paragraph{Experts improve memorization more than reasoning.} We observe that the conclusions from our theoretical results and synthetic experiments also hold when pre-training and evaluating language models on natural language and math. In \autoref{fig:world_knowledge_intro}, we report the accuracy of our models with respect to the number of \emph{total} parameters. All the lines in the plot approximately coincide which implies that regardless of the number of active parameters, MoEs can effectively use their routing to leverage all of their parameters to solve memory-intensive tasks. On the other hand, on commonsense and math benchmarks (Figures \ref{fig:commonsense_intro},\ref{fig:math_intro}) we find that MoEs do not reach the performance of dense models with the same number of total parameters. This indicates that for these reasoning tasks, increasing the dense model width is more effective that adding experts.

\paragraph{On math tasks, MoEs display a higher train-test gap than dense models, suggestive of memorization.} 
We provide additional evidence that memorization occurs in pre-trained MoEs by considering the generalization gap. 
In \autoref{fig:generalization_gap} we select 6,319 random problems from the OpenMathInstruct dataset, which is part of the training mixture data. More precisely, we pick 5,000 Hendrycks-MATH like examples and 1,319 GSM8k-like examples to ensure that the number of training examples matches with the corresponding number of examples in GSM8k and Hendrycks-MATH test sets. We then report the \emph{generalization gap}, which is the gap between the accuracy on training examples and test examples. While both dense transformers and MoEs make a \emph{single} pass on the OpenMathInstruct dataset, \autoref{fig:generalization_gap} shows that at scales beyond 159M parameters, MoEs suffer from a more significant generalization gap than dense transformers. This is suggestive that MoEs are more prone to overfit to the problems they have been pre-trained on than dense models.
 
\paragraph{MoE models excel at world knowledge tasks but match dense models in reasoning when perplexity is fixed.}
Finally, we focus on the relationship between validation perplexity and downstream performance in \autoref{fig:iso_ppl_math_nlp}. 
Rather than comparing models by their parameter count, we can compare them based on how well they fit the training distribution as measured by validation perplexity.
Even though two models may have the same perplexity, they will have learned different functions. The question is then if we can see any high level patterns in which types of functions a particular model class is more likely to learn. \autoref{fig:iso_ppl_world_knowledge} shows that at a fixed perplexity, the MoE models outperform the dense models on world knowledge tasks.

On the other hand, Figures \ref{fig:iso_ppl_commonsense} and \ref{fig:iso_ppl_math} show that MoEs and dense models perform about the same on the reasoning tasks at fixed validation perplexity. Note that both dense Transformers and MoEs are trained with the objective of minimizing the perplexity (or loss). However, there could be multiple strategies to achieve the same loss, e.g. by memorizing pieces of data or by improving ``reasoning'' capabilities. Which strategy the model prefers is determined by the implicit bias of the architecture. In the experiments, we see that for memorization / factual recall, MoEs achieve better accuracy for the same pereplexity value, suggesting that they ``prioritize'' memorization over other capabilties.


\begin{figure*}[t]
\centering
\begin{subfigure}[t]{0.31\textwidth}
\centering
\vskip 0pt
    \includegraphics[height=3.cm]{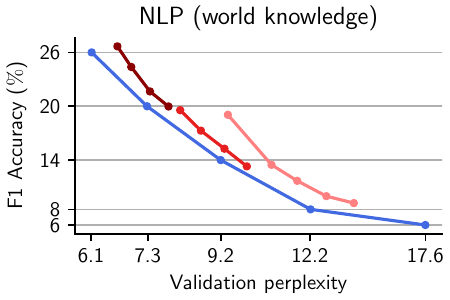}
   \scriptsize
    \vspace*{-.2cm}
    
        \mbox{\hspace*{1.cm}\cblock{65}{105}{225}\hspace{1mm}Dense transformer \hspace{1.5mm} \cblock{255}{128}{ 128}\hspace{1mm}MoE (18M active parameters) \hspace{1.5mm} \cblock{230}{32}{ 32}\hspace{1mm}MoE (58M active parameters)\hspace{1.5mm}  \cblock{139}{0}{ 0}\hspace{1mm}MoE (200M active parameters)}
    \vspace{-.1cm}
    \caption{}
    \label{fig:iso_ppl_world_knowledge}
\end{subfigure}
\hfill
\begin{subfigure}[t]{0.31\textwidth}
\centering
\vskip 0pt
\includegraphics[height=3.cm]{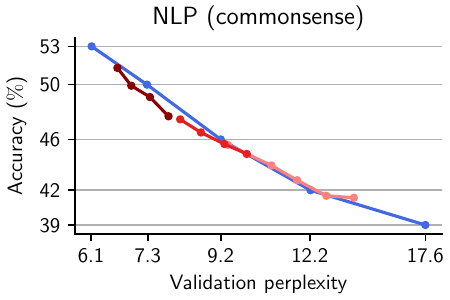}

\vspace*{.1cm}

\hspace*{1.5cm}\caption{}\label{fig:iso_ppl_commonsense}
\end{subfigure}
\hfill
\begin{subfigure}[t]
{0.31\textwidth}
\centering
\vskip 0pt
\includegraphics[height=3.cm]{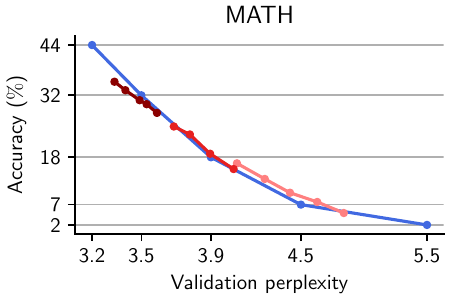}
\vspace{.1cm}
\caption{}
\label{fig:iso_ppl_math}
\end{subfigure}
\caption{(a) On world knowledge benchmarks, MoEs consistently outperform dense transformers in downstream performance when fixing the validation perplexity. (b-c) In reasoning benchmarks, dense transformers perform about the same as MoEs at a fixed validation perplexity. MoEs can achieve these perplexities with less active parameters, but may require substantially more total parameters.}\label{fig:iso_ppl_math_nlp}
\end{figure*}

\section{Discussion}

In recent years, scaling up the number of parameters in Transformers has been the dominant approach for improving performance on language modeling. A standard Transformer of dimension $d$ and sequence length $L$ has number of parameters which scales with $O(d^2)$, and run-time that scales with $O(d^2 L^2)$. 
Improving the efficiency can either attempt to reduce the dependence on $ L$ or $ d$. 
Sub-quadratic attention variants attempt to improve dependence on $ L $ \citep{katharopoulos2020transformers, peng2023rwkv, fu2022hungry,gu2023mamba}, while MoEs attempt to improve dependence on $ d$ by scaling the number of parameters without scaling the dimension of the model.

This paper illuminates the costs and benefits of this reduced dependence on $ d$. 
We show that for some reasoning-intensive tasks increasing the dimension $d$ is inevitable, and scaling the computation with $O(d^2)$ seems unavoidable. 
This remains true regardless of the different design choices in the MoE architecture and is backed up empirically. 
There is increasing interest in developing non-MoE models with sub-quadratic dependence on $d$, using some structural assumptions on the weight layers \citep{kamalakara2022exploring, dao2021pixelated, dao2022monarch, fu2024monarch}, which could provide an alternative.

On the other hand, we find that MoEs are highly effective at knowledge intensive tasks. They are able to much more efficiently memorize facts than dense models with a similar number of active parameters, even matching the performance of dense models with the same number of total parameters. This suggests that MoEs are valuable as memorization machines and perhaps this particular capability can be leveraged while relying on other architectures for more reasoning-intensive tasks.

\section*{Acknowledgements}

We thank Cyril Zhang for helpful discussions and Max Shad for his support while running the experiments. Kempner Institute computing
resources enabled this work. Samy Jelassi acknowledges funding supported by the Center of Mathematical Sciences and Applications. Alex Gu is supported by the National Science Foundation (NSF) Graduate Research Fellowship under
Grant No. 2141064. David Alvarez-Melis acknowledges support from the Kempner Institute, the Aramont Fellowship Fund, and the FAS Dean's Competitive Fund for Promising Scholarship.  This work has been made possible in part by a gift from the Chan Zuckerberg Initiative Foundation to establish the Kempner Institute
for the Study of Natural and Artificial Intelligence; support from the Office of Naval Research under award N00014-22-1-2377, and the
National Science Foundation Grant under award \#IIS 2229881.


\bibliography{references}
\bibliographystyle{iclr2025_conference}
\vfill
\pagebreak
\appendix

\section{Limitations and future work}

 While we provide substantial experiments on a wide range of tasks, many of these tasks are not pure memorization nor pure reasoning. We study these two extreme cases to better convey our message but it would be interesting in future work to understand the difference between dense and sparse models on tasks that mix both memorization and recall. Our pre-trained models have up to $\leq2.1$B parameters, but we recognize that large scale MoEs like Mixtral \citep{jiang2024mixtral}, DeepSeek-V2 \citep{dai2024deepseekmoe}, and others have orders of magnitude more parameters. We hypothesize that our results would still be meaningful at larger scales due to the strong theoretical underpinning, but it is not guaranteed. Moreover, as suggested above, it would be an interesting direction for future work to propose new architectures with reduced $ d $ dependence that can get the best of both worlds and solve reasoning and memorization tasks.

\section{Details on the pre-training datasets}\label{app:train_data}

In \autoref{sec:pretrain}, we pretrain two collections of models, one on ``natural language" and the other on ``math". Here, we give a precise breakdown of our training mixtures. We start with the ``natural language" training mixture that totals 64B tokens:
\begin{itemize}
   \item[--] 37B tokens from Fineweb-edu dedup \citep{penedo2024fineweb}.
   \item[--] 14B tokens from Cosmopedia \citep{benallal2024cosmopedia}.
   \item[--] 12B tokens from Wikipedia (we loop over Wikipedia 3 times).
   \item[--] 1B tokens from the training set of the downstream tasks we test on. We create 3 copies of each of these to increase their presence in the mixture. The presence of these datasets is pretty important as argued in \cite{allen2023physics} so that the model is familiar with the downstream tasks at test time.
   \begin{itemize}
       \item[$\ast$] ComplexWebQuestions training set \citep{talmor2018web} 
       \item[$\ast$] HotPotQA training set \citep{yang2018hotpotqa}
       \item[$\ast$] Natural Questions training set \citep{kwiatkowski2019natural}
       \item[$\ast$] TriviaQA training set \citep{joshi2017triviaqa}
       \item[$\ast$] WebQuestions training set \citep{berant2013semantic}
       \item[$\ast$] ARC-Easy and ARC-Challenge training sets \citep{clark2018think}
       \item[$\ast$] Hellaswag training set \citep{zellers2019hellaswag}
       \item[$\ast$] OpenBookQA training set \citep{mihaylov2018can}
       \item[$\ast$] PIQA training set \citep{bisk2020piqa}
       \item[$\ast$] SciQ training set \citep{welbl2017crowdsourcing}
       \item[$\ast$] SIQA training set \citep{sap2019socialiqa}
       \item[$\ast$] Winogrande training set \citep{sakaguchi2021winogrande}
   \end{itemize}
\end{itemize}

Our ``math" training mixture that totals 66B tokens gathers:
\begin{itemize}
       \item[--] 55B tokens from Proof-Pile 2 \citep{azerbayev2023llemma} that contain AlgebraicStack (11B), OpenWebMath \citep{paster2023openwebmath} and ArXiv (29B).
       \item[--] 2B tokens from OpenMathInstruct-1: we select the instances with a  correct answer from the training set  \citep{toshniwal2024openmathinstruct}
       \item[--] 7B tokens from DeepMind math \citep{saxton2019analysing}
       \item[--] 2B tokens from the following instruction-like datasets:
          \begin{itemize}
              \item[$\ast$] Math-Orca \citep{mitra2024orcamath}
              \item[$\ast$] TinyGSM \citep{liu2023tinygsm} (we only select 1 million examples from there).
              \item[$\ast$] StackMathQA \citep{stackmathqa2024}
              \item[$\ast$] MAmmoTH2 \citep{yue2024mammoth2}  (we only select the mathstackexchange subset).
              \item[$\ast$] NuminaMath-CoT \citep{numina2024winning} (duplicated 3 times)
              \item[$\ast$] MetaMathQA \citep{yu2023metamath} (duplicated 3 times)
          \end{itemize}
\end{itemize}


\section{Additional experiments}\label{app:experiments}

In all our experiments in \autoref{sec:pretrain}, we report the average accuracy performance obtained by our pre-trained models on respectively world knowledge, commonsense and math benchmarks. Here, we provide the results per task. In \autoref{app:total_params_per_task}, we display for each task, the downstream performance on a per parameter basis (similar to \autoref{fig:introduction}) and in \autoref{app:iso_ppl_per_task}, we plot for each task, the downstream performance on a per validation perplexity basis (similar to \autoref{fig:iso_ppl_math_nlp}).

\subsection{Downstream performance on a per parameter basis}\label{app:total_params_per_task}

\begin{figure*}[!h]
\centering
\begin{subfigure}[!h]{0.31\textwidth}
\centering
\vskip 0pt
    \includegraphics[height=3.cm]{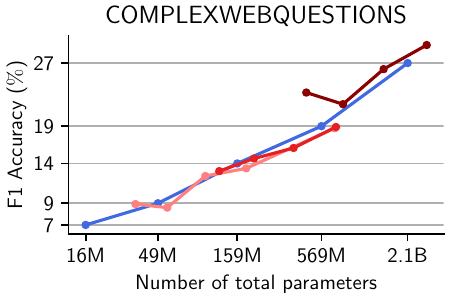}
\end{subfigure}
\hfill
\begin{subfigure}[!h]
{0.31\textwidth}
\centering
\vskip 0pt
\includegraphics[height=3.cm]{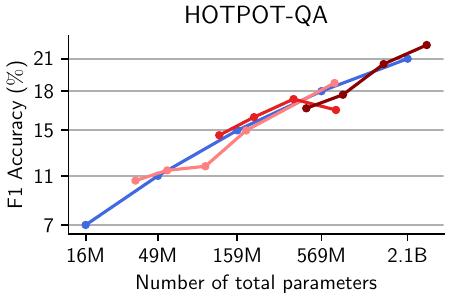}
\end{subfigure}
\begin{subfigure}[!h]
{0.31\textwidth}
\centering
\vskip 0pt
\includegraphics[height=3.cm]{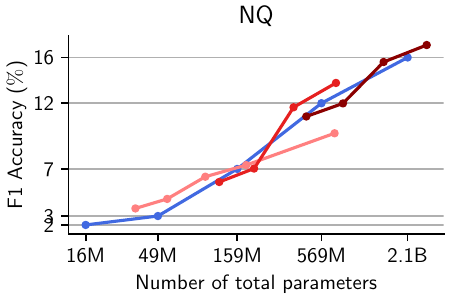}
\end{subfigure}
\begin{subfigure}[!h]
{0.31\textwidth}
\centering
\vskip 0pt
\includegraphics[height=3.cm]{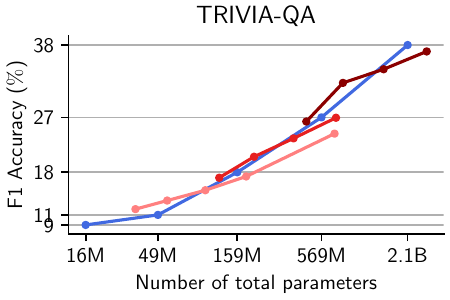}
\end{subfigure}
\begin{subfigure}[!h]
{0.31\textwidth}
\centering
\vskip 0pt
\includegraphics[height=3.cm]{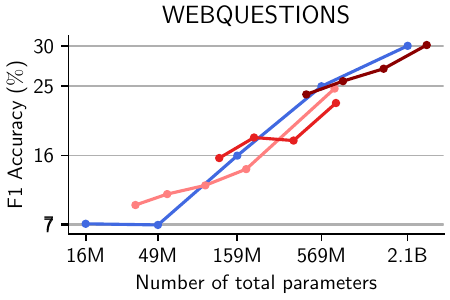}
\end{subfigure}
 \scriptsize
    \vspace*{.2cm}
    
        \mbox{\hspace*{1.cm}\cblock{65}{105}{225}\hspace{1mm}Dense transformer \hspace{1.5mm} \cblock{255}{128}{ 128}\hspace{1mm}MoE (18M active parameters) \hspace{1.5mm} \cblock{230}{32}{ 32}\hspace{1mm}MoE (58M active parameters)\hspace{1.5mm}  \cblock{139}{0}{ 0}\hspace{1mm}MoE (200M active parameters)}
\caption{Downstream performance on the world knowledge tasks with respect to the total number of parameters of the models.}\label{fig:total_params_world_knowledge_per_task}
\end{figure*}

\begin{figure*}[!h]
\vspace{-0.2cm}

\centering
\begin{subfigure}[!h]
{0.31\textwidth}
\centering
\vskip 0pt
\includegraphics[height=3.cm]{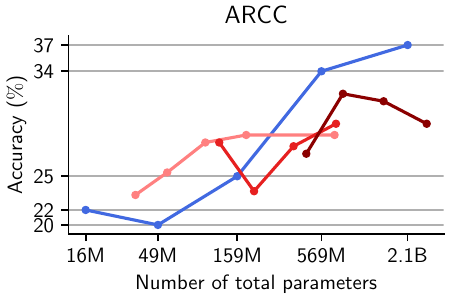}
\end{subfigure}
\begin{subfigure}[!h]
{0.31\textwidth}
\centering
\vskip 0pt
\includegraphics[height=3.cm]{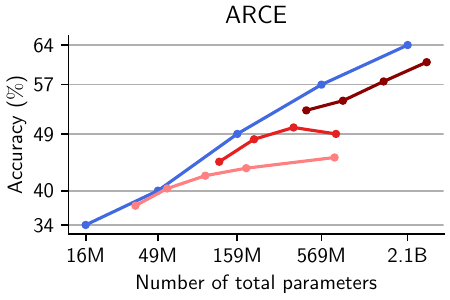}
\end{subfigure}
\begin{subfigure}[!h]
{0.31\textwidth}
\centering
\vskip 0pt
\includegraphics[height=3.cm]{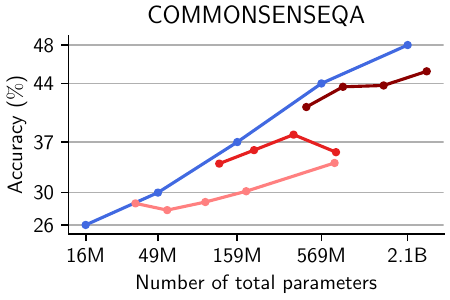}
\end{subfigure}
\begin{subfigure}[!h]
{0.31\textwidth}
\centering
\vskip 0pt
\includegraphics[height=3.cm]{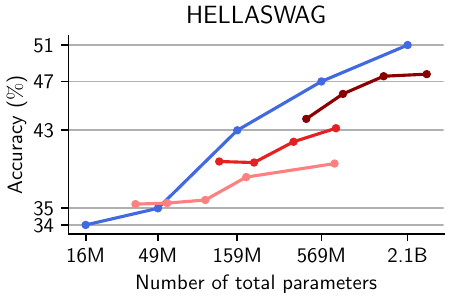}
\end{subfigure}
\begin{subfigure}[!h]
{0.31\textwidth}
\centering
\vskip 0pt
\includegraphics[height=3.cm]{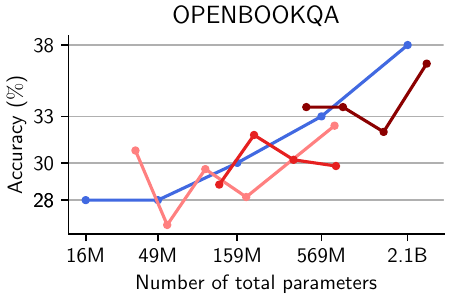}
\end{subfigure}
\begin{subfigure}[!h]
{0.31\textwidth}
\centering
\vskip 0pt
\includegraphics[height=3.cm]{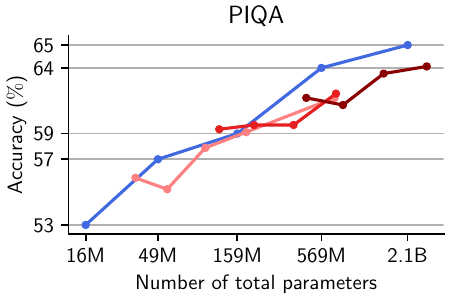}
\end{subfigure}
\begin{subfigure}[!h]
{0.31\textwidth}
\centering
\vskip 0pt
\includegraphics[height=3.cm]{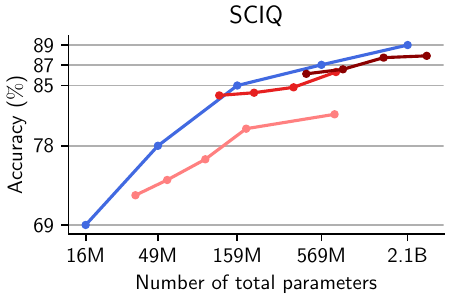}
\end{subfigure}
\begin{subfigure}[!h]
{0.31\textwidth}
\centering
\vskip 0pt
\includegraphics[height=3.cm]{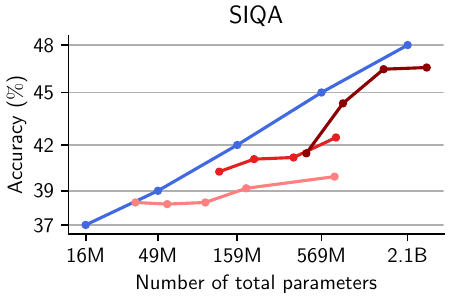}
\end{subfigure}
\begin{subfigure}[!h]
{0.31\textwidth}
\centering
\vskip 0pt
\includegraphics[height=3.cm]{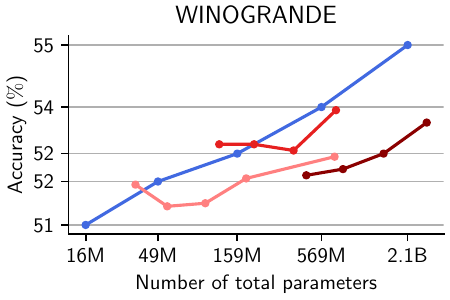}
\end{subfigure}
 \scriptsize
    \vspace*{0.1cm}
    
        \mbox{\hspace*{1.cm}\cblock{65}{105}{225}\hspace{1mm}Dense transformer \hspace{1.5mm} \cblock{255}{128}{ 128}\hspace{1mm}MoE (18M active parameters) \hspace{1.5mm} \cblock{230}{32}{ 32}\hspace{1mm}MoE (58M active parameters)\hspace{1.5mm}  \cblock{139}{0}{ 0}\hspace{1mm}MoE (200M active parameters)}

\caption{Downstream performance on the commonsense tasks with respect to the total number of parameters of the models.}\label{fig:total_params_nlp_reasoning_per_task}
\end{figure*}

\begin{figure*}[!h]
\centering
\begin{subfigure}[!h]
{0.31\textwidth}
\centering
\vskip 0pt
\includegraphics[height=3.cm]{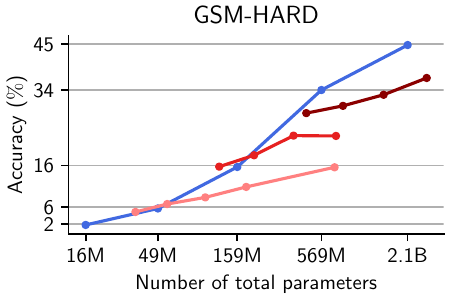}
\end{subfigure}
\begin{subfigure}[!h]
{0.31\textwidth}
\centering
\vskip 0pt
\includegraphics[height=3.cm]{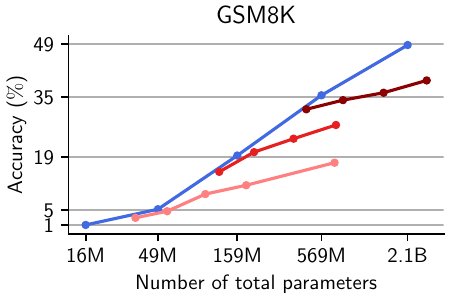}
\end{subfigure}
\begin{subfigure}[!h]
{0.31\textwidth}
\centering
\vskip 0pt
\includegraphics[height=3.cm]{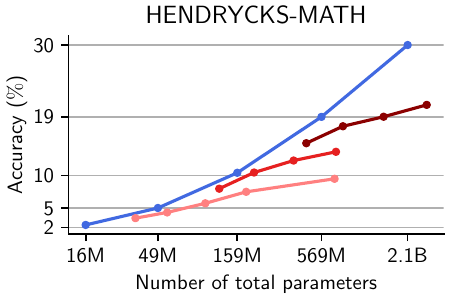}
\end{subfigure}
\begin{subfigure}[!h]
{0.31\textwidth}
\centering
\vskip 0pt
\includegraphics[height=3.cm]{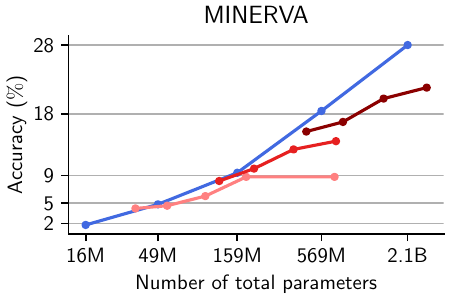}
\end{subfigure}
\begin{subfigure}[!h]
{0.31\textwidth}
\centering
\vskip 0pt
\includegraphics[height=3.cm]{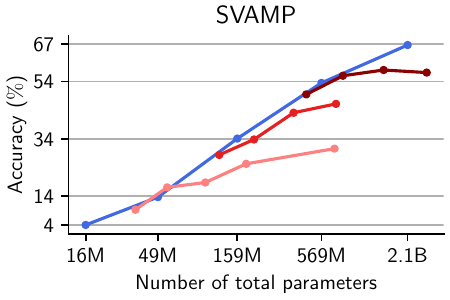}
\end{subfigure}

\scriptsize
    \vspace*{0.1cm}
    
        \mbox{\hspace*{1.cm}\cblock{65}{105}{225}\hspace{1mm}Dense transformer \hspace{1.5mm} \cblock{255}{128}{ 128}\hspace{1mm}MoE (18M active parameters) \hspace{1.5mm} \cblock{230}{32}{ 32}\hspace{1mm}MoE (58M active parameters)\hspace{1.5mm}  \cblock{139}{0}{ 0}\hspace{1mm}MoE (200M active parameters)}

\caption{Downstream performance on the math benchmarks with respect to the total number of parameters of the models.}\label{fig:total_params_math_per_task}
\end{figure*}

\newpage

\subsection{Downstream performance on a per val perplexity basis}\label{app:iso_ppl_per_task}

\begin{figure*}[!h]
\centering
\begin{subfigure}[!h]{0.31\textwidth}
\centering
\vskip 0pt
    \includegraphics[height=3.cm]{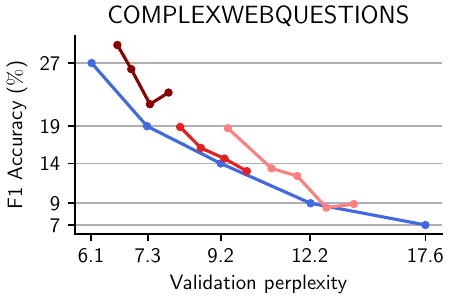}
\end{subfigure}
\hfill
\begin{subfigure}[!h]
{0.31\textwidth}
\centering
\vskip 0pt
\includegraphics[height=3.cm]{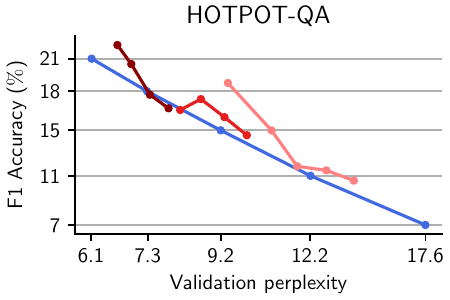}
\end{subfigure}
\begin{subfigure}[!h]
{0.31\textwidth}
\centering
\vskip 0pt
\includegraphics[height=3.cm]{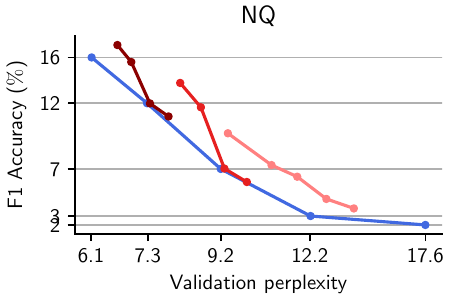}
\end{subfigure}
\begin{subfigure}[!h]
{0.31\textwidth}
\centering
\vskip 0pt
\includegraphics[height=3.cm]{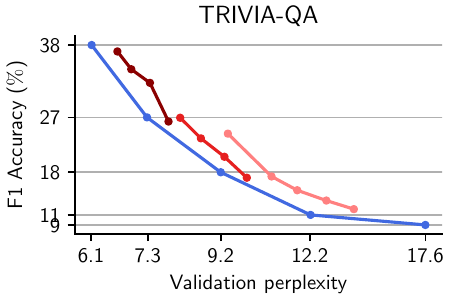}
\end{subfigure}
\begin{subfigure}[!h]
{0.31\textwidth}
\centering
\vskip 0pt
\includegraphics[height=3.cm]{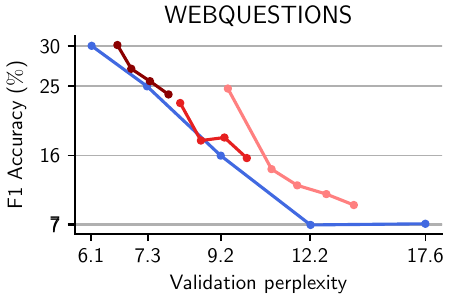}
\end{subfigure}

\scriptsize
    \vspace*{0.1cm}
    
        \mbox{\hspace*{1.cm}\cblock{65}{105}{225}\hspace{1mm}Dense transformer \hspace{1.5mm} \cblock{255}{128}{ 128}\hspace{1mm}MoE (18M active parameters) \hspace{1.5mm} \cblock{230}{32}{ 32}\hspace{1mm}MoE (58M active parameters)\hspace{1.5mm}  \cblock{139}{0}{ 0}\hspace{1mm}MoE (200M active parameters)}

\caption{Downstream performance on the world knowledge tasks with respect to the validation perplexity.}\label{fig:iso_ppl_world_knowledge_per_task}
\end{figure*}

\begin{figure*}[!h]
\centering
\begin{subfigure}[!h]
{0.31\textwidth}
\centering
\vskip 0pt
\includegraphics[height=3.cm]{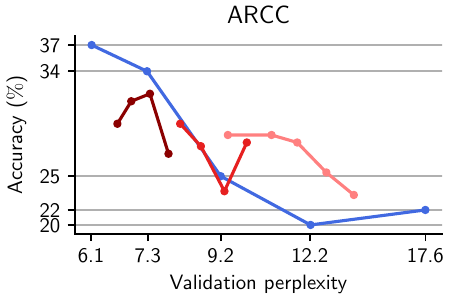}
\end{subfigure}
\begin{subfigure}[!h]
{0.31\textwidth}
\centering
\vskip 0pt
\includegraphics[height=3.cm]{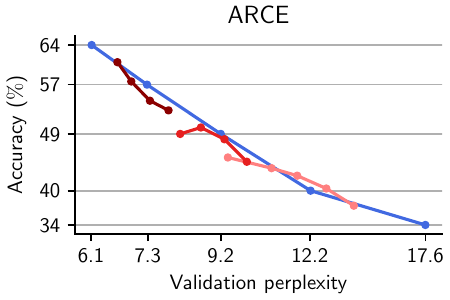}
\end{subfigure}
\begin{subfigure}[!h]
{0.31\textwidth}
\centering
\vskip 0pt
\includegraphics[height=3.cm]{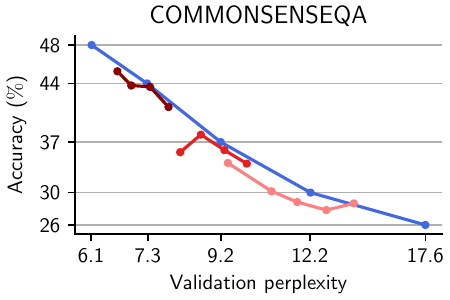}
\end{subfigure}
\begin{subfigure}[!h]
{0.31\textwidth}
\centering
\vskip 0pt
\includegraphics[height=3.cm]{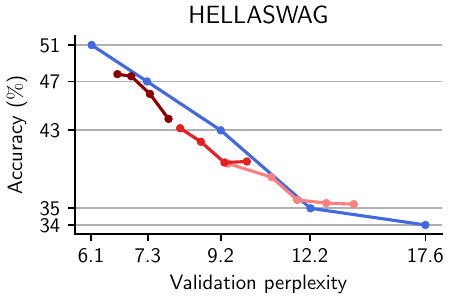}
\end{subfigure}
\begin{subfigure}[!h]
{0.31\textwidth}
\centering
\vskip 0pt
\includegraphics[height=3.cm]{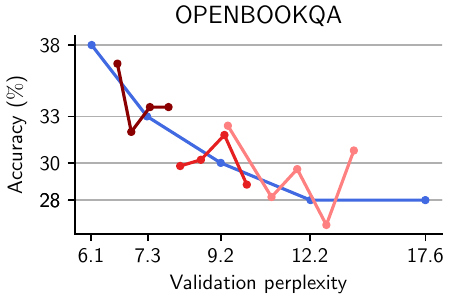}
\end{subfigure}
\begin{subfigure}[!h]
{0.31\textwidth}
\centering
\vskip 0pt
\includegraphics[height=3.cm]{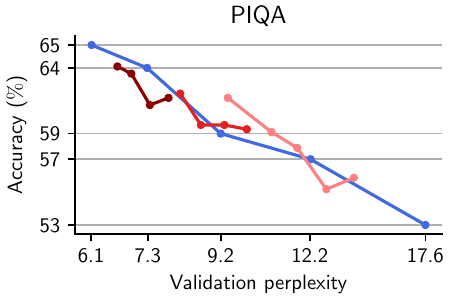}
\end{subfigure}
\begin{subfigure}[!h]
{0.31\textwidth}
\centering
\vskip 0pt
\includegraphics[height=3.cm]{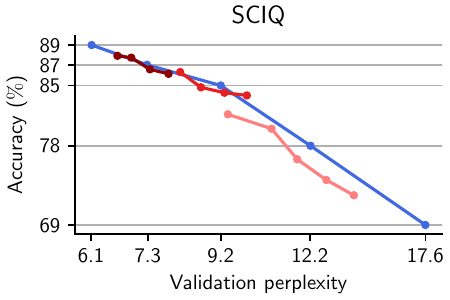}
\end{subfigure}
\begin{subfigure}[!h]
{0.31\textwidth}
\centering
\vskip 0pt
\includegraphics[height=3.cm]{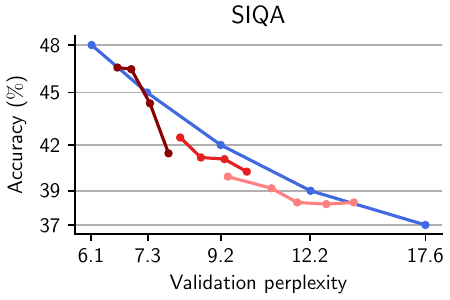}
\end{subfigure}
\begin{subfigure}[!h]
{0.31\textwidth}
\centering
\vskip 0pt
\includegraphics[height=3.cm]{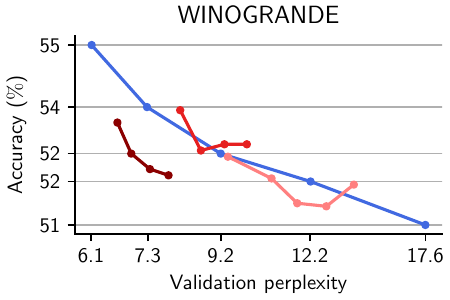}
\end{subfigure}

\scriptsize
    \vspace*{0.1cm}
    
        \mbox{\hspace*{1.cm}\cblock{65}{105}{225}\hspace{1mm}Dense transformer \hspace{1.5mm} \cblock{255}{128}{ 128}\hspace{1mm}MoE (18M active parameters) \hspace{1.5mm} \cblock{230}{32}{ 32}\hspace{1mm}MoE (58M active parameters)\hspace{1.5mm}  \cblock{139}{0}{ 0}\hspace{1mm}MoE (200M active parameters)}

\caption{Performance on the commonsense tasks with respect to the validation perplexity.}\label{fig:iso_ppl_nlp_reasoning_per_task}
\end{figure*}

\begin{figure*}[!h]
\centering
\begin{subfigure}[!h]
{0.31\textwidth}
\centering
\vskip 0pt
\includegraphics[height=3.cm]{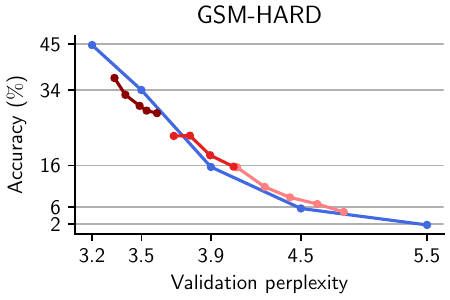}
\end{subfigure}
\begin{subfigure}[!h]
{0.31\textwidth}
\centering
\vskip 0pt
\includegraphics[height=3.cm]{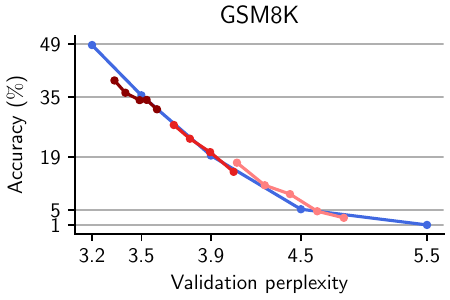}
\end{subfigure}
\begin{subfigure}[!h]
{0.31\textwidth}
\centering
\vskip 0pt
\includegraphics[height=3.cm]{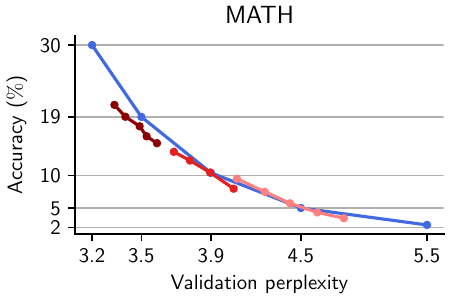}
\end{subfigure}
\begin{subfigure}[!h]
{0.31\textwidth}
\centering
\vskip 0pt
\includegraphics[height=3.cm]{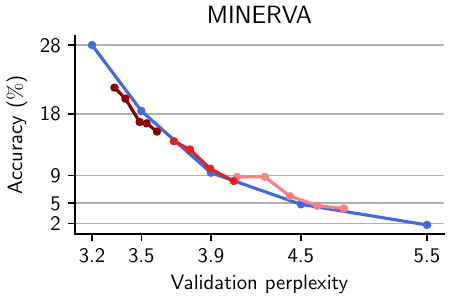}
\end{subfigure}
\begin{subfigure}[!h]
{0.31\textwidth}
\centering
\vskip 0pt
\includegraphics[height=3.cm]{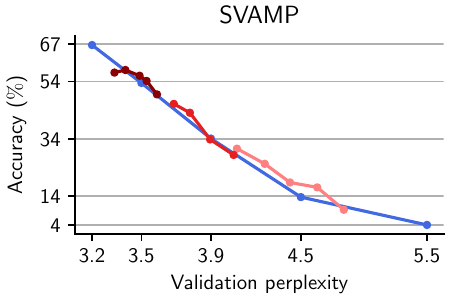}
\end{subfigure}

\scriptsize
    \vspace*{0.1cm}
    
        \mbox{\hspace*{1.cm}\cblock{65}{105}{225}\hspace{1mm}Dense transformer \hspace{1.5mm} \cblock{255}{128}{ 128}\hspace{1mm}MoE (18M active parameters) \hspace{1.5mm} \cblock{230}{32}{ 32}\hspace{1mm}MoE (58M active parameters)\hspace{1.5mm}  \cblock{139}{0}{ 0}\hspace{1mm}MoE (200M active parameters)}

\caption{Downstream performance on the math benchmarks with respect to the validation perplexity.}\label{fig:iso_ppl_math_per_task}
\end{figure*}

\newpage\phantom{blabla}
\newpage\phantom{blabla}
\section{Proofs}\label{app:proofs}
\subsection{Reasoning proofs}

\begin{definition}[Set-disjointness task]
     Set disjointness is the following task: given two inputs $A, B \in \{0, 1\}^r$ for some $r \in \Nset$, compute $\max_i A_i B_i$. 
\end{definition}
Set-disjointness can be thought of as follows: Alice and Bob are given sets $A$ and $B$ respectively. Their objective is to determine whether they have any overlapping items in their sets. 
\begin{lemma}[Equivalence of set-disjointness and length-2 path]
    \label{lemma:SD_L2path}
    The set-disjointness task is equivalent to the length-2 path task.
\end{lemma}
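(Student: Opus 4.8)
The plan is to establish the equivalence in both directions by giving explicit, precision-preserving reductions between the two tasks. Recall the setup: in set-disjointness, Alice holds $A \in \{0,1\}^r$ and Bob holds $B \in \{0,1\}^r$, and they must compute $\max_i A_i B_i$; in the length-2 path problem on $G = (V,E)$, the source $s$ and destination $d$ are fixed, and the question is whether some vertex $w$ is adjacent to both $s$ and $d$. The key observation is that the ``intermediate vertices'' $w \in V \setminus \{s,d\}$ play the role of the coordinates $i \in [r]$, the indicator of the edge $\{s,w\}$ plays the role of $A_i$, and the indicator of the edge $\{w,d\}$ plays the role of $B_i$.

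First I would argue the direction \emph{length-2 path reduces to set-disjointness}. Given a graph instance, partition the relevant edge set into those incident to $s$ and those incident to $d$ (this matches the hypothesis in Theorem~\ref{thm:cc_lowerbound}, where $X_A$ is a function of edges at $s$ and $X_B$ a function of edges at $d$). Set $r = |V| - 2$, index the non-terminal vertices $w_1, \dots, w_r$, and define $A_j = \Ind[\{s, w_j\} \in E]$ and $B_j = \Ind[\{w_j, d\} \in E]$. Then there is a length-2 path from $s$ to $d$ iff some $w_j$ is a common neighbor iff $A_j = B_j = 1$ for some $j$ iff $\max_j A_j B_j = 1$. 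So any protocol/model solving set-disjointness on $r = |V|-2$ bits yields one solving length-2 path, and the number of bits exchanged (or the width, in the transformer translation used in the paper) is preserved up to the trivial re-indexing.

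Conversely, for \emph{set-disjointness reduces to length-2 path}: given $A, B \in \{0,1\}^r$, build a graph on vertex set $\{s, d\} \cup \{w_1, \dots, w_r\}$ with $|V| = r+2$, include edge $\{s, w_j\}$ exactly when $A_j = 1$, include edge $\{w_j, d\}$ exactly when $B_j = 1$, and include no other edges (in particular no direct $s$--$d$ edge, so the only possible length-2 paths route through some $w_j$). Then a length-2 path from $s$ to $d$ exists iff $\exists j: A_j = B_j = 1$, which is exactly $\max_j A_j B_j = 1$. Again the construction is local: the ``$s$-side'' of the graph is a function of $A$ alone and the ``$d$-side'' is a function of $B$ alone, so this is a valid reduction in the communication model and respects the $A$/$B$ input partition that Theorem~\ref{thm:cc_lowerbound} needs.

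I would then conclude that the two problems are computationally equivalent (each reduces to the other with the input partition and resource measure preserved), which is precisely what is needed downstream: the known $\Omega(r)$ communication lower bound for set-disjointness transfers to length-2 path with $r = |V| - 2 = \Omega(|V|)$, feeding into the proof of Theorem~\ref{thm:cc_lowerbound}. The main thing to be careful about — rather than a genuine ``obstacle'' — is stating the equivalence in the right model: it should be a two-party communication equivalence where Alice's input determines the $s$-incident edges and Bob's determines the $d$-incident edges, so that a bit-precision transformer split according to subsets $A, B \subset [N-1]$ induces a valid communication protocol. I would make sure the lemma statement and its use are consistent on this point, and note that directedness of the edges (the paper's graphs are directed) is harmless: one simply uses arcs $s \to w_j$ and $w_j \to d$.
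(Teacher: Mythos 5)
Your proposal is correct and matches the paper's own proof essentially step for step: both directions use the same identification of intermediate vertices with coordinates $i \in [r]$, edges at the source $s$ with Alice's bits $A_i$, and edges at the destination $d$ with Bob's bits $B_i$. Your write-up is somewhat more careful than the paper's (explicitly tracking $r = |V| - 2$, the exclusion of a direct $s$--$d$ edge, and the compatibility with the $A/B$ input partition needed for Theorem~\ref{thm:cc_lowerbound}), but the underlying reduction is identical.
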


\begin{proof}
    $(\implies)$: Given an instance of set-disjointness, we can encode it into a length-2 path problem. Denote every item $i$ as a vertex. Denote two extra vertices as $A$, $B$, corresponding to Alice and Bob. For every element $i$ that Alice has, draw an edge between $A$ and $i$. For every element $i$ that Bob has, draw an edge between $B$ to $i$. If and only if there are any overlapping elements, then there is a length-2 path from $A$ to $B$. The number of elements because the number of vertices that do not belong to Alice or Bob.
    
    $(\impliedby)$: Consider an instance $G = (V, E), s, d$ of length-2 path, where $s$ is the source vertex and $d$ is the sink vertex. For all vertices with an edge with $s$, put this element into Alice's set of elements. For all vertices with an edge with $d$, put this element into Bobs's set of elements. If and only if there is a length-2 path, then Alice and Bob's sets are overlapping. Then, $r$ is the number of vertices. 
\end{proof}
\begin{lemma}[Communication complexity lower-bound on concatenated outputs]
\label{lemma:concatenation}
    For some sequence length, fix two disjoint subsets $A, B \subset [N-1]$, and consider a single-layer transformer $f \in \text{Transformer}_{m, H, 1}^N$   with $O(\log N)$-bit precision that solves set disjointness for any input $X$ where $X_A$ is a function of Alice's input $a \in \{0, 1\}^r$, $X_B$ is a function of Bob's input $b \in \{0, 1\}^r$, and $X_{[N] \setminus (A \cup B)}$ is fixed regardless of $a, b$. Then, $f$ has width satisfying $mH = \Omega(r / \log N)$.
\end{lemma}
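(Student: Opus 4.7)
The strategy is the textbook reduction from the two-party communication complexity of \emph{set disjointness}, which has the classical $\Omega(r)$ lower bound (Kalyanasundaram--Schnitger, Razborov) even for randomized protocols. I will show that any transformer $f \in \text{Transformer}_{m, H, 1}^N$ with $O(\log N)$-bit precision that solves the promised instance induces a one-way protocol of message length $O(mH \log N)$ between Alice (holding $a$) and Bob (holding $b$). This immediately forces $mH \log N = \Omega(r)$, hence $mH = \Omega(r/\log N)$.

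\paragraph{Protocol via additive decomposition of attention.} The central observation is that at the final token, each attention head is a sum over positions, which splits cleanly across $A$, $B$, and the public set $[N]\setminus(A\cup B)$. Writing $s_j^{(h)} = \phi(x_N)^\top Q_h K_h^\top \phi(x_j)$, decompose
\begin{align*}
u^{(h)} &= \sum_{j \in A} e^{s_j^{(h)}}\phi(x_j)V_h \;+\; \sum_{j \in B} e^{s_j^{(h)}}\phi(x_j)V_h \;+\; \sum_{j \notin A\cup B} e^{s_j^{(h)}}\phi(x_j)V_h,\\
Z^{(h)} &= \sum_{j \in A} e^{s_j^{(h)}} \;+\; \sum_{j \in B} e^{s_j^{(h)}} \;+\; \sum_{j \notin A\cup B} e^{s_j^{(h)}}.
\end{align*}
Since $x_N$, the fixed positions $X_{[N]\setminus(A\cup B)}$, and the parameters of $f$ are common knowledge, and since Alice and Bob respectively know $X_A$ and $X_B$, each party can compute their own partial sums locally. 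In the protocol, Alice sends $(u_A^{(h)}, Z_A^{(h)})_{h\in[H]}$ to Bob; Bob adds his $B$-partial sums and the public fixed-position contribution, normalizes to obtain the exact per-head output $u^{(h)}/Z^{(h)}$, concatenates across heads, applies $\psi$, and outputs $f(X)$, which by hypothesis decides set disjointness.

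\paragraph{Bit-counting, conclusion, and the main obstacle.} Each $u_A^{(h)} \in \Rset^m$ and $Z_A^{(h)} \in \Rset$ is an intermediate quantity of $f$ and is therefore represented at $O(\log N)$-bit precision, so the total message length is $(m+1)H \cdot O(\log N) = O(mH\log N)$. Combining with the $\Omega(r)$ communication lower bound gives $mH = \Omega(r/\log N)$. The only genuine subtlety is that the softmax couples positions non-linearly, which at first glance forbids a clean partition of the computation between the two parties. The resolution is exactly to transmit the unnormalized numerator and the scalar denominator separately, deferring the normalization to whichever party runs $\psi$. A minor bookkeeping concern---that raw exponentials may not fit in $O(\log N)$ bits---is handled by the standard shift $s_j^{(h)} \mapsto s_j^{(h)} - \max_k s_k^{(h)}$ (equivalently, working in log-space), which is implicit in the $O(\log N)$-bit precision assumption on $f$ and costs at most an additional $O(H \log N)$ bits if explicitly broadcast. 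Theorem~\ref{thm:cc_lowerbound} then follows by instantiating this bound with Lemma~\ref{lemma:SD_L2path}, which identifies $r$ with $\Theta(|V|)$.
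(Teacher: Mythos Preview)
Your argument is correct and follows the same communication-complexity reduction the paper inherits from \cite{sanford2024understanding}: split the unnormalized attention numerator and denominator into Alice's, Bob's, and public partial sums (the paper's $L_{h,S}$ and $Z_{h,S}$), transmit Alice's contributions in $O(mH\log N)$ bits, and invoke the $\Omega(r)$ bound for set disjointness. The paper's own proof is a one-line deferral to that reference, merely observing that concatenating heads before an arbitrary $\psi$ changes nothing; you have simply written out the underlying protocol in full, including the softmax-shift bookkeeping that the cited work also handles.
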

\begin{proof}
    By re-writing the following, the remainder of the proof from \cite{sanford2024understanding} still holds.
    \[
    \text{DISJ}(a, b) = \psi \paren*{\bracket*{\softmax \paren*{\phi(x_N)^\top Q_h K_h^\top \phi(X)}\phi(X) v_h }_{h \in [H]} }.
    \]
This is because we may still use the same definition for $Z_{h, S}, L_{h, S}$ as in the proof. 
Hence, this concludes the proof.
\end{proof}

\subsubsection{Proof of Theorem \ref{thm:cc_lowerbound}}
We restate the corollary.
\begin{theorem}[Theorem \ref{thm:cc_lowerbound}]
    For some input sequence $G = (V, E)$, fix two disjoint subsets $A, B \subset [N-1]$, and consider a single-layer transformer $f \in \text{Transformer}_{m, H, 1, K}^N$   with $O(\log N)$-bit precision that solves length-2 path for any input $X$ where $X_A$ is a function of edges with the source $s$, $X_B$ is a function of edges with the destination $d$. Then, $f$ has width satisfying $mH = \Omega(\abs{V} / \log N)$.
\end{theorem}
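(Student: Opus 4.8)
The plan is to reduce Theorem~\ref{thm:cc_lowerbound} directly to the communication-complexity machinery already assembled in Lemma~\ref{lemma:SD_L2path} and Lemma~\ref{lemma:concatenation}, and then observe that the top-$1$ routing of an MoE does not help Alice and Bob communicate any less. First I would recall that, by Lemma~\ref{lemma:SD_L2path}, an instance of set-disjointness on inputs $a,b\in\{0,1\}^r$ with $r=\abs{V}-2$ can be encoded as a length-$2$ path instance on a graph $G=(V,E)$: the items become $r$ ``middle'' vertices, and the source $s$, destination $d$ are the two extra vertices, with edges $\{s,i\}$ present iff $a_i=1$ and $\{d,i\}$ present iff $b_i=1$. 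Under this encoding the edges incident to $s$ are a function of $a$ only, and those incident to $d$ a function of $b$ only, so the portions $X_A$ and $X_B$ of the input sequence are functions of $a$ and $b$ respectively, while the remaining coordinates $X_{[N]\setminus(A\cup B)}$ (vertex labels, the fixed query tokens, formatting tokens) are independent of $(a,b)$ — exactly the hypothesis format required by Lemma~\ref{lemma:concatenation}.

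The second step is to handle the sparse $\psi$. Write the MoE output as $\psi(\vx)=\bu_{i}^{\top}\sigma(\mW_i\vx+\vb_i)$ with $i=\argmax_j \br_j^\top\vx$. The key point, already flagged in the paragraph preceding the theorem, is that in the communication protocol one of the players can simulate $\psi$ entirely from the concatenated head-outputs $\bracket*{\cdot}_{h\in[H]}$: the router index $i$ is a deterministic function of that concatenated vector, and given $i$ the expert is just a fixed $O(\log N)$-bit MLP. So the number of bits that need to be exchanged to evaluate $f$ is governed solely by the size of the concatenated attention outputs and their precision, i.e. $O(mH\log N)$ bits, \emph{independently of $K$}. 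Concretely, I would re-run the argument of Lemma~\ref{lemma:concatenation} verbatim: the quantities $Z_{h,S},L_{h,S}$ that decompose each head's softmax aggregation into an Alice-part and a Bob-part are defined exactly as in \cite{sanford2024understanding} and do not reference $\psi$ at all; the only change is that after the players exchange (rounded) $Z_{h,S},L_{h,S}$ for all $h$, the designated player now additionally computes the $\argmax$ over routers and then the chosen expert locally. This yields a one-way (or two-round) protocol for set-disjointness on $r$ bits using $O(mH\log N)$ communication.

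The final step is to invoke the classical $\Omega(r)$ randomized/deterministic communication lower bound for set-disjointness: $O(mH\log N)\ge \Omega(r)=\Omega(\abs{V})$, hence $mH=\Omega(\abs{V}/\log N)$, which is the claimed bound. I expect the only real subtlety — and hence the ``hard part'' — to be bookkeeping the bit-precision: one must check that rounding the head aggregates $Z_{h,S},L_{h,S}$ to $O(\log N)$ bits does not change the value of $\argmax_j\br_j^\top\vx$ or flip the sign of the final scalar output, so that the protocol computes length-$2$ path \emph{exactly} and thus genuinely solves set-disjointness. This is precisely the robustness already established in \cite{sanford2024understanding} for the dense case, and since the router and expert are themselves $O(\log N)$-bit functions applied after the aggregation, the same error analysis carries through with at most a constant-factor blow-up in the precision parameter; everything else is a direct transcription of the dense proof with the extra ``route-then-apply'' step appended to $\psi$.
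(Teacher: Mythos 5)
Your proposal is correct and follows essentially the same route as the paper: reduce length-$2$ path to set-disjointness via Lemma~\ref{lemma:SD_L2path}, note that the communication bottleneck in Lemma~\ref{lemma:concatenation} is entirely in transmitting the concatenated head outputs $Y(X_N)$, and observe that the top-$1$ router plus chosen expert is just one more arbitrary local post-processing of $Y(X_N)$ (the paper writes it as $f(X_N)=\sum_{i}\I\{r(Y(X_N))=i\}\psi_i(Y(X_N))$), so the argument of \cite{sanford2024understanding} goes through unchanged and the set-disjointness lower bound gives $mH=\Omega(\abs{V}/\log N)$. Your extra care about rounding of $Z_{h,S},L_{h,S}$ and the stability of the $\argmax$ is a sound refinement but not a departure from the paper's proof.
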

\begin{proof}
The proof outline is as follows:
\begin{enumerate}
    \item Adapt Lemma 39 \citep{sanford2024understanding} to support concatenation instead of addition from different attention heads.
    \item The lower bound with concatenation holds for length-2 path because set-disjointness and length-2 path are equivalent.
    \item Extend the result to sparse transformers.
\end{enumerate}
We complete the first step with Lemma \ref{lemma:concatenation}. We complete the second set due to Lemma \ref{lemma:SD_L2path}. It remains to show that a router function also yields the same lower bound.
We show that Lemma 39 of \cite{sanford2024understanding} can be generalized to the case in which $\psi$ is applied according to a routing function. Specifically, consider a top-$1$ routing function $r: \Rset^m \rightarrow [K]$, and $K$ element-wise functions $\psi_1, \ldots, \psi_K: \Rset^m \rightarrow \Rset$. For shorthand, define:
\[
Y(X_N) = \bracket*{\softmax \paren*{\phi(x_N)^\top Q_h K_h^\top \phi(X)}\phi(X) v_h }_{h \in [H]},
\]
which is the output of the attention head prior to applying the element-wise transformation.
Next, we define $f(X_N)$ as the output when the router function $r$ is used to select $\psi_i$.
\[
f(X_N) =\sum_{i \in K} \I \curl*{r(Y(X_N)) = i} \psi_i(Y(X_N)).
\]
Because the lower bound does not place any restrictions on the function $\psi$ and rather argues a communication-complexity lower bound due to information from $Y(X_N)$, the lower bound also holds for a routing function.
\end{proof}

\subsubsection{Proof of Theorem \ref{thm:mem_ub}}
We re-state Theorem \ref{thm:mem_ub} and give its proof.
\begin{theorem}[Theorem \ref{thm:mem_ub}]
For sequence length $N$, there exists $f \in \text{Transformer}_{m, 1, 1}^N$  with $O(\log N)$-bit precision  and width $\abs{V}$ that solves length-2 path for any input $X$.
\end{theorem}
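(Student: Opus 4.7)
The plan is to construct an explicit single-head, width-$\abs{V}$ transformer whose embedding devotes one coordinate to each vertex $v \in V$, and which ``writes'' into that coordinate whenever the current token is an edge incident to $s$ or to $d$, using two distinguishable ``signatures''. After uniform attention aggregates these signatures at the final token, a ReLU MLP can detect the presence of both signatures at a common coordinate, which is exactly the existence of a length-$2$ path from $s$ to $d$.

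Concretely, I would define $\phi$ on edge tokens $(u,w)$ by
\[
\phi(u,w) \;=\; \alpha\, e_w\, \mathbb{1}[u=s] \;+\; \beta\, e_u\, \mathbb{1}[w=d],
\]
with constants $\alpha=1,\beta=2$ (and the analogous symmetric rule if the encoding treats edges in both orientations), and send all non-edge tokens (\BOS, \SEP, query tokens, etc.) to the zero vector. Taking $Q=K=0$ yields uniform softmax weights $1/N$ over the whole sequence, and choosing $V=I$ gives an attention output whose $v$-th coordinate equals $\tfrac{1}{N}\bigl(\alpha\,\mathbb{1}[(s,v)\in E] + \beta\,\mathbb{1}[(v,d)\in E]\bigr)$. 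Since the only values that can appear at a coordinate are $0,\,1/N,\,2/N,\,3/N$, I then take $\psi(\vx)=\vone^\top \sigma\bigl(\vx - \tau\,\vone\bigr) - \eta$ with threshold $\tau \in (2/N,\,3/N)$ and a tiny shift $\eta$. The quantity $\vone^\top \sigma(\vx-\tau\vone)$ is strictly positive iff some coordinate carries the combined signature $3/N$, i.e.\ iff some vertex $v$ satisfies both $(s,v)\in E$ and $(v,d)\in E$; with $\eta$ calibrated to lie strictly between $0$ and the smallest positive attainable value of this sum, the sign of $\psi$ correctly classifies the length-$2$ path problem.

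The main obstacles are bookkeeping rather than conceptual, and I expect the trickiest part to be verifying the bit-precision constraint. I need to argue that $\alpha/N,\beta/N,(\alpha+\beta)/N$, the threshold $\tau$, and the correction $\eta$ can all be stored and separated using only $O(\log N)$ bits: this follows because each is a rational with denominator $\Theta(N)$ and constant-size numerator, so a precision of $c\log N$ bits suffices for some absolute constant $c$. The remaining checks---that $Q=K=0$ really does produce exactly uniform attention, that non-edge tokens contribute $0$ at every coordinate because $\phi$ sends them to $\0$, and that the $\phi$ defined above is itself representable in $O(\log N)$ bits since its nonzero entries are $\alpha$ or $\beta$---are routine, and together yield the claimed construction in $\text{Transformer}_{\abs{V},1,1}^N$.
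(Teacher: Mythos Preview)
Your construction is correct and follows essentially the same route as the paper: a per-vertex one-hot embedding, zero $Q,K$ so that attention is uniform, $V=I$ so the output is a simple average of the embeddings, and a ReLU threshold to detect any coordinate whose accumulated value is ``too large''. The paper uses equal contributions ($\alpha=\beta=1$) and thresholds between $1/|V|$ and $2/|V|$, whereas you use distinguishable signatures $\alpha=1,\beta=2$ and threshold between $2/N$ and $3/N$; the asymmetric signatures are unnecessary here (since only the sum matters), but they do no harm. Your treatment of the $O(\log N)$-bit precision is in fact more careful than the paper's, which simply asserts that rounding $1/|V|$ preserves the separation. One cosmetic point: the paper's MLP has no output bias, so your final $-\eta$ shift should be realized by an extra neuron (e.g.\ zero weight, positive bias, negative output weight), but this is a one-line fix and does not affect the width bound.
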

\begin{proof}

Tokens are elements in $\mathcal{V} = V \cup \{0\} \times V \cup \{0\}$. The input is as follows: for vertex $i$, if the source shares an edge with that vertex, then the $i$'th input value is $(s, i)$. Otherwise, it is $(s, 0)$. The first $\abs{V}$ tokens we see correspond to edges possibly shared with the source vertex. Then, the last $\abs{V}$ input tokens correspond to edges possibly shared with the destination vertex and share the same format as the first $r$ tokens. In between, we can have arbitrary edges $(u, v)$. 
We define an embedding function where $\mathbf{e}_i$ is the $i$'th standard basis vector in dimension $r$.
\begin{align*}
\phi: \ & \mathcal{V} \rightarrow \Rset^{\abs{V}  }\\
& (u, v) \mapsto \begin{cases}
    \mathbf{e}_i &\text{if } i > 0 \text{ and } u = s \text{ or } u = v \\
    \mathbf{0} &\text{if } i = 0.
\end{cases}
\end{align*}
Next, we define $V_h \in \Rset^{\abs{V} \times \abs{V}}$ to be the identity matrix, and $Q_h, V_h \in \Rset^{\abs{V} \times \abs{V}}$ both to have $0$ everywhere. Consequently, the attention matrix is given by:
\[
\paren*{\begin{bmatrix}
    1 / \abs{V} & \ldots & 1/\abs{V} \\
    \vdots & \ddots \\
    1 / \abs{V} &  & 1/\abs{V} 
\end{bmatrix}  \phi(X)}_{j, i} =\begin{cases}
    2 / \abs{V} & \text{if there is a path through $i$} \\
    1/\abs{V} & \text{if one target vertex shares an edge with } i \\
    0 & \text{otherwise.}
\end{cases}
\]
For any entry that exceeds $\frac{1}{\abs{V}}$, the correct answer is there is a length-2 path. Hence, any thresholding function which achieves this separation suffices.  Provided that any rounding $\frac{1}{\abs{V}}$ to $\tilde{c}$ so that it is represented with $O(\log N)$ is sufficient to separate between $\tilde{c}$ and $2\tilde c$, then $O(\log N)$ bits are sufficient.
\end{proof}

\subsection{Memorization Proofs}
\label{appc:memorization}

Assume that $K$ is a power of 2, and let $m = m_0 + \log(K)$. We associate each expert $i$ with a vector $\bv_i \in \{\pm 1\}^{\log(K)}$ and choose $\br_i = (v_1, \dots, v_{\log(K)}, 0, \dots, 0) \in \mathbb{R}^m$.

\begin{lemma}
\label{lemma:prob1k}
    For some $c > 0$, and some input $x \sim \cN(0, c I_m)$, the probability of routing to expert $i$ is $1/K$ for all $i$.
\end{lemma}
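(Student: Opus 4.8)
The routing rule sends $x$ to expert $i = \argmax_j \br_j^\top x$. By construction $\br_j = (\bv_j, 0, \dots, 0)$ where $\bv_j \in \{\pm 1\}^{\log K}$ and the $\bv_j$ range over all $K$ distinct sign vectors of length $\log K$. Hence $\br_j^\top x = \bv_j^\top x_{1:\log K}$ depends only on the first $\log K$ coordinates of $x$, and the routing decision is determined by which of the $K$ sign patterns best correlates with the vector $z := (x_1, \dots, x_{\log K})$. The plan is to show that this $\argmax$ is uniformly distributed over $[K]$ purely by a symmetry argument.

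\textbf{Key steps.} First I would reduce to the restricted Gaussian vector $z \sim \cN(0, c I_{\log K})$, since the remaining coordinates of $x$ are irrelevant to routing. Second, I would observe that maximizing $\bv_j^\top z$ over $\bv_j \in \{\pm 1\}^{\log K}$ has a trivial solution: coordinatewise, $\bv_j^\top z = \sum_{\ell} (v_j)_\ell z_\ell$ is maximized by choosing $(v_j)_\ell = \sgn(z_\ell)$. So the index $i$ selected is exactly the one whose sign vector equals $(\sgn(z_1), \dots, \sgn(z_{\log K}))$ — i.e. routing to expert $i$ happens iff $\sgn(z) = \bv_i$. Third, since $z$ has i.i.d.\ mean-zero Gaussian (hence symmetric, atomless) coordinates, the sign vector $\sgn(z)$ is uniform over $\{\pm 1\}^{\log K}$: each coordinate is $\pm 1$ with probability $1/2$ independently, and $\Pr[z_\ell = 0] = 0$. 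Therefore $\Pr[\text{route to } i] = \Pr[\sgn(z) = \bv_i] = 2^{-\log K} = 1/K$ for every $i$, which is the claim.

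\textbf{Main obstacle.} There is essentially no hard step here — the lemma is a clean symmetry fact — but the one point requiring a little care is the tie-breaking in the $\argmax$: one must check that ties among the $\br_j^\top x$ occur with probability zero so that the routed expert is almost surely unique and the argument above applies. This follows because a tie would force some coordinate $z_\ell = 0$ (two sign vectors differing in coordinate $\ell$ give equal inner products only when $z_\ell = 0$), an event of probability zero under the continuous Gaussian law. I would state this observation explicitly and then conclude.
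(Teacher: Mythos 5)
Your proof is correct and takes essentially the same approach as the paper's: identify the routed expert with the one whose sign vector matches $\sgn(x_{1:\log K})$, then invoke the symmetry of the mean-zero Gaussian to conclude that this sign vector is uniform over $\{\pm 1\}^{\log K}$. Your explicit observation that ties occur with probability zero is a small but worthwhile bit of rigor that the paper's terse proof leaves implicit.
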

\begin{proof}
    By construction, we choose the expert as follows:
    \[
    i = \argmax_{j \in [K]} v_j^\top x.
    \]
    The solution above must be the one whose signs match those of $x$. Because the entries of $x$ are $0$-mean, this implies that the probability of routing to any particular expert is $1/K$. 
\end{proof}

\begin{lemma}
     Fix $\delta \in (0,1)$ and some expert $j$. With probability at least $1-\delta$, the number of examples routed to $j$ is at most $2N/K$, given $N \ge \frac{K^2\ln\paren*{1/\delta} }{2}$ examples from $\mathcal{N}(0, cI_m)$.
\end{lemma}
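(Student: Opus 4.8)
The plan is to bound the number of examples routed to a fixed expert $j$ via a straightforward concentration argument. By Lemma~\ref{lemma:prob1k}, each of the $N$ i.i.d.\ examples $X^i[j] \sim \cN(0, cI_m)$ is routed to expert $j$ independently with probability exactly $1/K$ (the routing decision depends only on the signs of the first $\log K$ coordinates, which are symmetric and independent across samples). Thus, if $Z$ denotes the number of examples routed to $j$, then $Z \sim \mathrm{Binomial}(N, 1/K)$, so $\E[Z] = N/K$.

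First I would invoke a multiplicative Chernoff bound for the upper tail of a binomial: for $Z \sim \mathrm{Binomial}(N,p)$ with $\mu = Np$, we have $\Pr[Z \ge 2\mu] \le \exp(-\mu/3)$. Plugging in $\mu = N/K$ gives $\Pr[Z \ge 2N/K] \le \exp(-N/(3K))$. To make this at most $\delta$ it suffices that $N \ge 3K\ln(1/\delta)$. Alternatively — and this matches the precise constant in the statement — one can use Hoeffding's inequality on the sum of the $N$ independent Bernoulli$(1/K)$ indicators: $\Pr[Z - N/K \ge t] \le \exp(-2t^2/N)$. Setting $t = N/K$ yields $\Pr[Z \ge 2N/K] \le \exp(-2N/K^2)$, which is at most $\delta$ precisely when $N \ge \frac{K^2 \ln(1/\delta)}{2}$, exactly the hypothesis in the lemma. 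So I would present the Hoeffding version to get the stated bound with the stated sample-size condition.

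There is no real obstacle here; the only point requiring a little care is justifying that the routing events across the $N$ examples are genuinely independent Bernoulli$(1/K)$ random variables, which follows because the $X^i[j]$ are i.i.d.\ Gaussian and Lemma~\ref{lemma:prob1k} pins down the per-example routing probability to expert $j$. One should also note that the lemma as stated fixes expert $j$, so no union bound over experts is needed at this stage (that would come later, in the proof of Theorem~\ref{thm:moe_transformer}, at the cost of replacing $\delta$ by $\delta/K$ and hence an extra $\log K$ factor inside the logarithm, which is absorbed by the $\tilde O$ notation). I would close by restating the conclusion: with probability at least $1-\delta$, at most $2N/K$ of the $N$ examples are routed to expert $j$.
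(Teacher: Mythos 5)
Your proof is correct and follows essentially the same route as the paper: define the Bernoulli indicator of the event that example $i$ is routed to expert $j$, use Lemma~\ref{lemma:prob1k} to get $\Pr[X_i=1]=1/K$, and apply Hoeffding's inequality with deviation $t = N/K$ to obtain $\exp(-2N/K^2)$, then rearrange to get the sample-size condition. Your aside about the multiplicative Chernoff bound is a reasonable (indeed tighter) alternative, but the paper, like your final choice, uses Hoeffding to recover the exact constant in the statement.
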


\begin{proof}
    The proof relies on a Hoeffding bound: for bounded independent random variables $X_1, \ldots, X_N$ such that $a \leq X_i \leq b \ \forall i$, and $S_N = \sum_{i = 1}^N X_i$, we have:
    \[
    \Pr[S_N - \E[S_N] \geq \E[S_N]] \leq \exp \curl*{- \frac{2\E[S_N]^2}{N(b-a)^2}}.
    \]
    In our case, let $X_i = \1\{\text{sample } i \text{ is routed to expert } j\}$, for some fixed $j \in [K]$. Note that $\Pr[X_i = 1] = 1/K$ due to Lemma \ref{lemma:prob1k} and as such $E[S_N] = N/K$. Further, $X_i \in [0, 1]$. Hence we obtain:
    \begin{align*}
    \Pr\bracket*{S_N  \geq \frac{2N}{K}} &\leq \exp \curl*{- \frac{2(N/K)^2}{N}} \\ 
    &\leq \exp \curl*{- \frac{2N}{K^2}}.
    \end{align*}
In order to upper-bound this probability with $\delta$, we obtain:
\begin{align*}
    & \delta \geq \exp \curl*{- \frac{2N}{K^2}} \\
    &\implies \ln\paren*{\frac{1}{\delta}} \leq \frac{2N}{K^2} \\
    &\implies \frac{K^2\ln\paren*{1/\delta} }{2} \leq N.
\end{align*}

\end{proof}

\begin{lemma}
    Fix $\delta \in (0,1)$,  $K \in \Nset$ and the routing function as described in Lemma \ref{lemma:prob1k}. Given $n \geq \frac{K^2\ln\paren*{1/\delta} }{2}$ samples with embedding dimension $m$. For every expert $i$, with probability at least $1-\delta$, there exists an MLP of width $\tilde{O}(n/mK)$ that correctly classifies the examples routed to this expert.
\end{lemma}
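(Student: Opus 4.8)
The plan is to combine the concentration bound from the preceding lemma with a standard ``memorization in general position'' construction for two-layer ReLU networks. Fix the expert $i$ and condition on the high-probability event $\gE$ (of probability at least $1-\delta$, using the hypothesis $n \ge K^2\ln(1/\delta)/2$) supplied by the preceding lemma, on which at most $t := 2n/K$ of the $n$ examples are routed to expert $i$. It then suffices to exhibit a two-layer ReLU MLP $\psi(\vx) = \vu^\top \sigma(\mW\vx + \vb)$ with $\mW \in \Rset^{m' \times m}$ and $m' = \tilde{O}(t/m) = \tilde{O}(n/(mK))$ such that $\sign(\psi(\vx_j)) = y_j$ for the $t$ routed feature vectors $\vx_j \in \Rset^m$ with their arbitrary labels $y_j \in \curl*{\pm 1}$ (these $\vx_j$ being the last-token representations on which the router and the expert act in the construction underlying Theorem \ref{thm:moe_transformer}, which are absolutely continuous Gaussian-type vectors).

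First I would record the two genericity facts the construction needs, both holding over the Gaussian inputs even after conditioning on the routing event. (i) \emph{General position}: with probability $1$, any $m$ of the routed points are affinely independent, and for a generic direction $\w \in \Rset^m$ the projections $\langle \w, \vx_j\rangle$ are pairwise distinct. Conditioning on being routed to $i$ only restricts the sign pattern on the $\log K$ routing coordinates supporting $\br_i$, so each $\vx_j$ keeps a conditional law that is absolutely continuous on an $m$-dimensional region; hence these genericity statements fail on a Lebesgue-null set. (ii) \emph{Quantitative separation}: with probability at least $1-\delta$ (applying the preceding lemma and a Gaussian anti-concentration bound each with failure probability $\delta/2$, which only changes constants in the hypothesis on $n$) the routed points have norm $\tilde{O}(\sqrt m)$, the projection gaps are at least $1/\mathrm{poly}(n)$, and the block interpolation systems below are well-conditioned; this is what later lets all weights be rounded to $\tilde{O}(1)$ bits and the ReLU ramps be made steep enough with bounded magnitude.

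Next I would invoke (or briefly reprove) the Baum-type memorization bound: $t$ points in general position in $\Rset^m$ with arbitrary $\pm 1$ labels can be sign-memorized by a two-layer ReLU network with $O\paren*{\lceil t/m \rceil}$ hidden units. Concretely, sort the routed points by $\langle \w, \cdot\rangle$, partition them into $\lceil t/m \rceil$ consecutive blocks of at most $m$ points each, and for each block $B_k$ solve the (well-posed, by general position) affine system for a map $\vx \mapsto \langle \vw_k, \vx\rangle + c_k$ that outputs $y_j$ exactly at every $\vx_j \in B_k$; then gate this affine interpolant so that it contributes only on the projection interval of $B_k$, which costs a constant number of ReLU units per block because consecutive blocks are separated along $\w$ by fact (ii). Summing over blocks yields width $O(t/m) = O(n/(mK))$, and folding in the $\tilde{O}(1)$-bit rounding from (ii) gives $m' = \tilde{O}(n/(mK))$; intersecting $\gE$ with the genericity events keeps the total success probability at least $1-\delta$.

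The main obstacle is the second paragraph: pushing the genericity and separation statements through the conditioning on the routing event, and making them quantitative enough to support both the $O(t/m)$ neuron count (invertibility of the per-block $m\times m$ systems) and the $\tilde{O}(1)$-bit precision (well-conditioning of those systems and polynomially large projection gaps). Each piece is a routine application of absolute continuity, Gaussian anti-concentration, and a union bound over the $\le t \le n$ points, but it is the part that requires care; the Baum-style interpolation construction itself is standard and contributes no new difficulty.
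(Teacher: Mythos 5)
Your proposal takes a genuinely different route. The paper's proof rests on the Daniely (2020) memorization result: a depth-$2$ network with a $0$-mean, Lipschitz activation and $q$ neurons memorizes $N$ i.i.d.\ standard Gaussian points in dimension $M$ with high probability whenever $N \lesssim Mq/\log^4 M$; the paper applies this to the routed points and then converts the absolute-value activation to ReLU by doubling the width. You instead propose a Baum-style block-interpolation: project onto a direction $\omega$, partition the routed points into blocks of at most $m$, interpolate the labels affinely within each block, and gate each interpolant to its projection slab with a few ReLUs. That construction is plausible (the gating can indeed be done with a constant number of ReLUs per block once the projection gaps are large relative to the magnitude of the interpolants on out-of-block points, which your fact (ii) is meant to ensure), and both routes land at width $\tilde{O}(n/(mK))$, so this is a legitimate alternative.

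The step where the paper is cleaner, and where your proposal sinks most of its effort, is the conditioning on the routing event. You argue that the conditional law of a routed point is still absolutely continuous (because only the sign pattern on the $\log K$ routing coordinates is constrained) and then propose to push general position, anti-concentration, and well-conditioning through that conditional law. The paper sidesteps all of this: it builds the expert MLP to depend only on the last $m - \log K \ge m/2$ coordinates, which the router never reads, so \emph{after} conditioning on which expert was chosen those coordinates are exactly i.i.d.\ Gaussian with no distortion. Daniely's theorem then applies verbatim in dimension $m - \log K$, which only changes constants and logarithmic factors. This coordinate-splitting trick is the real content of the lemma's proof and eliminates the entire conditional-distribution bookkeeping you flag as the ``main obstacle''; I would fold it into your argument (apply your Baum-style construction only to the non-routing coordinates) rather than fight the conditional law head on.
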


\begin{proof}

    We begin with a result from \cite{daniely2020memorizing}. Consider $h$, a depth-2 network with $q$ neurons, an activation function $\phi: \Rset \rightarrow \Rset$, which is $O(1)$-Lipschitz, piecewise twice-differentiable, and satisfies $\E_{X \sim \cN(0, 1)}[\phi'(X)] = 0$, weights $\a_i \in \{\pm 1\}$ such that $\sum_{i = 1}^q a_i = O(\sqrt q)$.
    Such a network, defined by $W$ and $a$, computes the function:
    \[
    h_{W, a}(x) = \frac{1}{\sqrt{q}}\sum_{i = 1}^q a_i \phi(\langle w_i, x\rangle ).
    \]
    For our proof we consider $\phi$ to be the absolute value function, and later show how to obtain an MLP with ReLU activations.
    
     Consider the set of samples $((x_1, y_1), \ldots, (x_N, y_N))$, where $x_i \sim \cN(0, I_M)$ and $y_i \in \{\pm 1\}$ is a Rademacher random variable. The objective is to find a $W^*, a^*$ such that $y_i h_{W^*, a^*}(x_i) > 0 \ \forall i$. \cite{daniely2020memorizing} gives the following result: for $N \leq \frac{Mq}{\log^4(M)}$ and $q \geq \log^4(M)$, with probability $1 - o(1)$ there exists $h$ such that, for every $i \in [N]$,  $y_ih(x) = \Omega(\ln(M))$.

     We build on this result as follows: first, we assume that the attention matrix performs averaging of the input sequence, such that $x_i = \frac{1}{N}\sum_{j = 1}^N (X^i)_j$ where  where $X^i$ is the $i$-th input sequence to memorize. Assuming the inputs are also Gaussian, then so is the average. Hence, $x_i \in \Rset^m$ and $x_i \sim \cN(0, cI_m)$ for $c = \frac{1}{L}$.
    
    Next, we fix the number of experts to be $K$. We fix a router function $r: \Rset^{\log K} \rightarrow [K]$ such that, on average, each expert must memorize $n/K$ sequences, and with high probability, each expert must memorize at most $2n/K$ sequences. We use the construction from Lemma \ref{lemma:prob1k} where the router only acts on the first $\log K$ entries of each vector.
    Hence, we have $K$ MLPs, each tasked with memorizing at most $\frac{2n}{K}$ input-output values. We use the network construction from \cite{daniely2020memorizing} to only the last $m-\log K \ge m/2$ coordinates, which are not used by the router, so that their distribution is independent and Gaussian. 
    
    Hence, with $n \leq \frac{K}{2}\frac{(m - \log K) q}{\log^4(m - \log K))} \leq \frac{K}{4}\frac{m q}{\log^4(m - \log K))}$ and $q \geq \log^4(m - \log K)$, with high probability, $y_ih_j(x_i) = \Omega(\ln(m))$, where $h_j$ is the expert to which $x_i$ is routed.  It follows that width 
    \[
    q \geq \frac{4n \log^4 (m - \log K)}{m K}
    \]
    is sufficient for each individual expert, and we assume that $\frac{4n}{mK} \geq 1$.

    In order to obtain a MLP with a ReLU activation defined as $\sigma(x) = \max\{0, x\}$, we use the fact that $\abs{x} = \sigma(x) + \sigma(-x)$. This is because absolute value is a valid activation function in \cite{daniely2020memorizing}, but ReLU is not. However, doubling the width of each MLP is sufficient to obtain MLPs with ReLU activations instead, i.e.:
    \[
    q \geq \frac{8n \log^4 (m - \log K)}{mK}.
    \]
    This is done by taking a solution which is of form:
    \[
    h_{W, a}(x) = \frac{1}{\sqrt{q}}\sum_{i = 1}^q a_i \abs{\langle w_i, x\rangle},
    \]
    and converting it to the form:
    \[
    h_{W, a}^{'}(x) = \frac{1}{\sqrt{q}}\sum_{i = 1}^q a_i\paren*{ (\sigma{\langle w_i, x\rangle}) + (\sigma{\langle - w_i, x\rangle})}.
    \]

    The router has $K$ vectors each of dimension $m$. Consequently, we need $O(K m)$ parameters to store the router. Each expert has width $\frac{8n \log^4(m - \log K)}{mK} = \tilde{O}\paren*{\frac{n}{mK}}$ and $\tilde{O}\paren*{\frac{n}{K}}$ parameters. 
\end{proof}

\begin{corollary}
   
    Let $\delta \in (0,1)$, and fix $K > 1$. For $n \geq \frac{K^2 \ln (K / \delta)}{2}$,  with probability  at least $1-\delta$, there exists a sparse transformer $s$ with $K$ experts such that $y_i s(x_i) = \Omega(\ln m)$. It has $\tilde{O}(n + Km)$ parameters and  $\tilde{O}(n/K + Km)$ active parameters.
\end{corollary}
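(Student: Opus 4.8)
The plan is to obtain the corollary by invoking the per-expert memorization lemma of this subsection simultaneously for all $K$ experts and bundling the pieces into one transformer; in effect this is a restatement of Theorem \ref{thm:moe_transformer} with an explicit margin and an explicit sample threshold. First I would fix the architecture exactly as in the proof of Theorem \ref{thm:moe_transformer}: a single attention head with $Q_h = K_h = 0$ and $V_h = I$, so that the head returns the average $x_i = \frac1N\sum_j (X^i)_j$ of the $i$-th input sequence. Since the tokens $(X^i)_j$ are i.i.d.\ $\cN(0,I_m)$, the averaged vector satisfies $x_i \sim \cN(0, cI_m)$ with $c = 1/N$, matching the hypotheses of the memorization lemmas. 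The router is taken to be the sign-pattern construction of Lemma \ref{lemma:prob1k}, reading only the first $\log K$ coordinates of each vector, so by that lemma the routing is uniform over the $K$ experts.

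Next I would do the probability accounting. Apply the per-expert memorization lemma with confidence parameter $\delta/K$ in place of $\delta$: the hypothesis $n \ge \frac{K^2\ln(K/\delta)}{2}$ is exactly the threshold that lemma requires at confidence $\delta/K$, so for each fixed expert, with probability at least $1-\delta/K$, there is an MLP of width $\tilde O(n/(mK))$ — acting on the $m-\log K$ coordinates not read by the router, hence on independent Gaussian coordinates — that correctly classifies the (at most $2n/K$) sequences routed to it, in fact with margin $y_i h_j(x_i) = \Omega(\ln(m-\log K))$. A union bound over the $K$ experts yields, with probability at least $1-\delta$, the simultaneous existence of all $K$ such MLPs. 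Declaring these to be the experts of the MoE, and using $m-\log K \ge m/2$ (valid whenever $m \ge 2\log K$, i.e.\ in any regime where $K$ experts are meaningful), gives a sparse transformer $s$ with $y_i s(x_i) = \Omega(\ln m)$ for all $i$.

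Finally I would count parameters, as in Theorem \ref{thm:moe_transformer}. Each expert stores a weight matrix in $\Rset^{q\times m}$ with $q = \tilde O(n/(mK))$, together with a bias and an output vector, for $\tilde O(qm) = \tilde O(n/K)$ parameters; summing over all $K$ experts gives $\tilde O(n)$. The router is $K$ vectors in $\Rset^m$, i.e.\ $O(Km)$ parameters, and the attention head contributes nothing extra (it is the zero/identity construction already used in Theorem \ref{thm:moe_transformer}). Hence the total parameter count is $\tilde O(n + Km)$. Under top-1 routing, a forward pass on any input reads the full router (all $K$ vectors are needed to evaluate the $\argmax$) and exactly one expert, so the active parameter count is $\tilde O(n/K + Km)$.

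The main obstacle is essentially bookkeeping rather than new mathematics: the substantive content is already contained in the per-expert memorization lemma, which itself folds in the load-balancing Hoeffding bound and the construction of \cite{daniely2020memorizing}. The two points that require care are (i) threading the confidence budget $\delta/K$ through the union bound over experts so that it is consistent with the stated threshold on $n$, and (ii) verifying that restricting each expert to the $m-\log K$ coordinates outside the router's support costs only a constant factor in width and leaves the per-example margin at $\Omega(\ln m)$.
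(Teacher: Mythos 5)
Your proposal is correct and takes essentially the same route as the paper: apply the per-expert memorization lemma with confidence $\delta/K$, note that the threshold $n \ge \frac{K^2 \ln(K/\delta)}{2}$ is exactly what that lemma requires, union-bound over the $K$ experts, and sum the parameter counts. You supply slightly more bookkeeping than the paper's terse proof (which, incidentally, appears to drop the factor of $m$ in the router count, writing $\tilde O(n+K)$ where the statement — and your accounting — have $\tilde O(n+Km)$); your version is consistent with the corollary as stated.
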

\begin{proof}
    For each expert, we apply the result from Lemma C.8 with $\delta^\prime = \delta / K$. Hence, for every expert, with probability at most $\delta / K$, there does not exist an MLP of width $\tilde{O}(n / mK)$ which memorizes its samples. 
    By a union bound, this implies that with probability at most $\delta$, at least one of the experts is not able to memorize its samples. Hence, with probability at least $1 - \delta$, all of the experts are able to perform memorization on their given samples. In total, this implies we will use $\tilde{O}(n + K)$ parameters to store the entire mixture of expert network. The number of active parameters is $\tilde{O}(n/K + K)$.
\end{proof}

\begin{lemma}[Bound on $\ell_2$-norm of vector from $\cN(0, c I_m)$ \label{lemma:l2bound}]
    Let $x \sim \cN(0, c I_m)$ for some $c > 0$, and let $\delta \in (0, 1)$. Then, with probability at least $1 - \delta$, there exists a constant $c_{\delta,m} > 0$, such that
    \[
    \norm{x}_2 \leq c_{\delta,m} = \sqrt{ c \left( m + 2 \sqrt{ m \ln\left( \dfrac{1}{\delta} \right) } + 2 \ln\left( \dfrac{1}{\delta} \right) \right) }.
    \]
\end{lemma}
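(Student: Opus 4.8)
The plan is to reduce the statement to a standard chi-squared upper-tail inequality. Write $x = \sqrt{c}\,z$ with $z \sim \cN(0, I_m)$, so that $\norm{x}_2^2 = c\sum_{i=1}^m z_i^2$ and $\norm{x}_2^2/c$ is a chi-squared random variable with $m$ degrees of freedom. It therefore suffices to prove that, with probability at least $1-\delta$,
\[
\sum_{i=1}^m z_i^2 \;\le\; m + 2\sqrt{m\ln(1/\delta)} + 2\ln(1/\delta),
\]
and then multiply through by $c$ and take (monotone) square roots; positivity of $c_{\delta,m}$ is immediate since $c,m>0$.

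The displayed inequality is exactly the Laurent--Massart bound: for $z\sim\cN(0,I_m)$ and any $t>0$, $\Pr[\sum_{i=1}^m z_i^2 \ge m + 2\sqrt{mt} + 2t] \le e^{-t}$; specializing $t = \ln(1/\delta)$ gives failure probability $\delta$. If one prefers a self-contained derivation rather than a citation, the route is a Chernoff argument on the moment generating function: for $\lambda \in (0,\tfrac12)$ one has $\E[e^{\lambda z_i^2}] = (1-2\lambda)^{-1/2}$, hence by independence $\Pr[\sum_i z_i^2 \ge u] \le (1-2\lambda)^{-m/2} e^{-\lambda u}$; bounding the exponent $-\tfrac m2\ln(1-2\lambda) - \lambda u$ via the elementary inequality $-\ln(1-v) \le v + \tfrac{v^2}{2(1-v)}$ (or plugging in the near-optimal $\lambda$ associated with $u = m + 2\sqrt{mt}+2t$) yields the tail bound $e^{-t}$. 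Only the upper tail is needed here, so the (slightly more delicate) lower-tail half of Laurent--Massart can be ignored.

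I do not anticipate a genuine obstacle; the argument is routine concentration. The only points deserving a little care are (i) keeping track of the constants so that the final bound reads precisely $\sqrt{c(m + 2\sqrt{m\ln(1/\delta)} + 2\ln(1/\delta))}$ — any cruder sub-exponential estimate of the shape $\norm{x}_2^2 \le c\,(m + O(\sqrt{m\ln(1/\delta)} + \ln(1/\delta)))$ would also be enough for the downstream use in Lemma~C.8, but the stated form is clean and worth retaining — and (ii) noting that the reduction $x = \sqrt{c}z$ and the monotonicity of $t\mapsto\sqrt{t}$ on $[0,\infty)$ let us pass from the bound on $\norm{x}_2^2$ to the bound on $\norm{x}_2$ without any loss.
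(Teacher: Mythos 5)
Your proof is correct and follows essentially the same route as the paper's: reduce to a chi-squared random variable via $x=\sqrt{c}\,z$, apply the Laurent--Massart upper-tail bound with $t=\ln(1/\delta)$, then multiply by $c$ and take square roots. The optional Chernoff sketch you include is a standard way to prove Laurent--Massart from scratch but is not needed, since the paper (like you) simply cites it.
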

\begin{proof}
    Each component $x_i$ of $x$ is distributed as $x_i \sim \cN(0, c)$. We can express $x_i$ as $x_i = \sqrt{c}\, z_i$, where $z_i \sim \cN(0, 1)$. Then,
    \[
    \norm{x}_2^2 = \sum_{i=1}^m x_i^2 = c \sum_{i=1}^m z_i^2.
    \]
    The sum $\sum_{i=1}^m z_i^2$ follows a chi-squared distribution with $m$ degrees of freedom. By the Laurent-Massart theorem~\cite{73a7bee2-f91d-3c13-8f3a-95b04baa6599}, for all $t > 0$, we have
    \[
    \Pr\left( \sum_{i=1}^m z_i^2 \geq m + 2 \sqrt{m t} + 2 t \right) \leq e^{-t}.
    \]
    Multiplying both sides inside the probability by $c$, we obtain
    \[
    \Pr\left( \norm{x}_2^2 \geq c \left( m + 2 \sqrt{m t} + 2 t \right) \right) \leq e^{-t}.
    \]
    Setting $t = \ln\left( \dfrac{1}{\delta} \right)$, it follows that
    \[
    \Pr\left( \norm{x}_2^2 \geq c \left( m + 2 \sqrt{m \ln\left( \dfrac{1}{\delta} \right)} + 2 \ln\left( \dfrac{1}{\delta} \right) \right) \right) \leq \delta.
    \]
    Therefore, with probability at least $1 - \delta$, we have
    \[
    \norm{x}_2 \leq \sqrt{ c \left( m + 2 \sqrt{ m \ln\left( \dfrac{1}{\delta} \right) } + 2 \ln\left( \dfrac{1}{\delta} \right) \right) }.
    \]
\end{proof}

\begin{lemma}[Bounded bit complexity required]
With high probability, $\tilde{O}(1)$ bits are required to store each weight in the network, and $\tilde O (n+K)$ bits are required to store the entire network. $\tilde{O}\paren*{\frac{n}{K}+K}$ active bits are required.
\end{lemma}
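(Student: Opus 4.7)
The plan is to show that the MoE construction from Theorem~\ref{thm:moe_transformer} can be quantized to $\tilde O(1)$ bits per weight without flipping the sign of any expert's output on any of the $n$ training inputs. The network's parameters fall into three groups: (i) the $K$ router vectors $\br_i$, whose nonzero entries are $\pm 1$; (ii) the $q = \tilde O(n/(mK))$ hidden weights $\vw_\ell \in \mathbb{R}^m$ inside each expert; and (iii) the $q$ output coefficients $a_\ell \in \{\pm 1\}$ of each expert. Groups (i) and (iii) are already discrete and use exactly one bit per entry, so the real work is quantizing the hidden weights of group (ii).

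First I would control the inputs. Applying Lemma~\ref{lemma:l2bound} together with a union bound over the $n$ averaged sequence embeddings $x_i \sim \cN(0, c I_m)$, every $\|x_i\|_2 \le c_{\delta/n,m} = \tilde O(\sqrt m)$ with probability at least $1-\delta$. Next I would round each entry of each hidden weight $\vw_\ell$ to the nearest multiple of some $\eps > 0$, obtaining $\widetilde \vw_\ell$ with $\|\vw_\ell - \widetilde \vw_\ell\|_\infty \le \eps$. By Cauchy--Schwarz, $|\langle \vw_\ell - \widetilde \vw_\ell,\, x_i\rangle| \le \eps \sqrt m\,\|x_i\|_2$, and since ReLU is $1$-Lipschitz and $|a_\ell|=1$, the change in any expert's output $\tfrac{1}{\sqrt q}\sum_{\ell=1}^q a_\ell\,\sigma(\langle \vw_\ell, x_i\rangle)$ is at most $\sqrt q\,\eps \sqrt m\,c_{\delta/n,m}$.

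The final step is to balance this perturbation against the Daniely margin. The preceding memorization lemma guarantees $y_i\,h(x_i) = \Omega(\ln m)$ in the un-quantized model, so the sign is preserved whenever $\sqrt q\,\eps \sqrt m\,c_{\delta/n,m} = o(\ln m)$. Plugging in $q = \tilde O(n/(mK))$ and $c_{\delta/n,m} = \tilde O(\sqrt m)$ gives the sufficient condition $\eps \le 1/\mathrm{poly}(m,n,K)$, which requires only $O(\log(mnK)) = \tilde O(1)$ bits per weight. Multiplying by the $\tilde O(n + Km)$ total parameters from Theorem~\ref{thm:moe_transformer} gives $\tilde O(n + Km)$ bits of total storage, and since on any single input only one expert plus the router is active, the active-bit count is $\tilde O(n/K + Km)$. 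The main technical obstacle is the accumulated perturbation bound across the $q$ neurons of an expert; once the $\ell_2$ control on the $x_i$ from Lemma~\ref{lemma:l2bound} is in place the rest is routine Lipschitz/Cauchy--Schwarz algebra, and the router and output coefficients contribute no additional quantization difficulty because they are already $\pm 1$-valued.
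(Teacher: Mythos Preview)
Your approach is essentially the paper's: quantize each hidden weight to a grid of step $\eps$, use the $1$-Lipschitzness of ReLU together with Cauchy--Schwarz to bound $|h(x)-\tilde h(x)|\le \sqrt q\,\sqrt m\,\eps\,\|x\|_2$, invoke Lemma~\ref{lemma:l2bound} to control $\|x\|_2$, and pick $\eps$ small enough to preserve the Daniely margin. The observation that the router vectors and the output coefficients $a_\ell$ are already $\{\pm 1\}$-valued and cost one bit each is also exactly how the paper handles them implicitly.

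There is one missing ingredient. To conclude that a weight entry can be stored in $\tilde O(1)$ bits you need both a lower bound on the grid step $\eps$ \emph{and} an upper bound on the magnitude $M=\|\vw_\ell\|_\infty$: the bit count is $\log(2M/\eps)$, not $\log(1/\eps)$. Your proposal establishes only the former. The paper closes this gap by tracing Daniely's one-step gradient construction: the final weight is $w_\ell^{(0)}-\eta\,\partial h(x)/\partial w_\ell$ with $\|w_\ell^{(0)}\|_\infty\le 1$, $\eta=\tfrac{n\log m}{m}$, and $\|\partial h(x)/\partial w_\ell\|_\infty \le \tfrac{1}{\sqrt q}\|x\|_2 \le \tfrac{c_{\delta,m}}{\sqrt q}$, giving $M\le 1+\tfrac{n\log m}{m\sqrt q}\,c_{\delta,m}=\mathrm{poly}(n,m,K)$. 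With that bound in hand your choice $\eps\le 1/\mathrm{poly}(m,n,K)$ does yield $\log(2M/\eps)=\tilde O(1)$ bits per weight, and the total and active bit counts follow exactly as you wrote.
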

\begin{proof}
We show that the learning result from \cite{daniely2020memorizing} gives a network with bounded bit complexity. Hence, it suffices to bound the bit complexity of the initial weights and the bit complexity of the gradient update.
First, we begin with $h(x) = \sqrt{\frac{1}{q}} \sum_{i = 1}^q a_i \sigma(\langle w_i, x\rangle)$. The objective is to show there exists $\tilde{h}(x)$ with bounded bit complexity which satisfies $y_i \tilde{h}(x_i) > 0 \ \forall i$. Suppose we begin by creating bins for the weights, and replace each $w_i$ with $\tilde{w}_i$ where $\norm{w_i - \tilde w_i}_\infty \leq \varepsilon$. Further, assume that $\norm{w_i}_\infty \leq M$ for some $M > 0$. We then obtain that $\norm{w_i - \tilde w_i}_2 \leq \sqrt{m} \varepsilon$. 
In addition, we state that with probability at least $1 - \delta$, $\norm{x}_2 \leq c_{\delta,m}$ due to Lemma \ref{lemma:l2bound}.
Using this, we have that:
\begin{align*}
    \abs{\sigma(\langle w_i, x \rangle) - \sigma(\langle \tilde w_i, x \rangle)} &\leq \abs{\langle w_i - \tilde w_i, x \rangle }  && (\sigma \text{ is } 1 \text{-Lipschitz}) \\
    &\leq \norm{w_i - \tilde w_i}_2 \norm{x}_2 &&(\text{Cauchy-Schwarz})\\
    &\leq \sqrt m \varepsilon c_{\delta,m} \text{ w.p.} \geq 1 - \delta&& (\text{Lemma \ref{lemma:l2bound}}).
\end{align*}
We apply this result to obtain that with probability at least $1 - \delta$,
\begin{align*}
    \abs{h(x) - \tilde h (x)} &\leq \frac{1}{\sqrt q} \sum_{i = 1}^q \abs{a_i} \abs*{\sigma(\langle w_i, x \rangle) - \sigma(\langle \tilde w_i, x \rangle)} \\
    &\leq \sqrt{q} \sqrt m \varepsilon c_{\delta,m}.
\end{align*}
Given that $h(x) \geq O(1)$, then for some arbitrarily small constant (we use $\frac{1}{4}$) we require: $\frac{1}{4} \leq \sqrt{q} \sqrt m \varepsilon c_{\delta,m}$, or equivalently, $\varepsilon \leq \frac{1}{4c_{\delta,m}\sqrt{qm}}$. Because $h(x) \geq O(1)$, we use this separation of $\frac{1}{4}$ to show that $\tilde{h}(x)$ remains the same sign as $h(x)$ (however this constant can be replaced with an arbitrarily small constant so as to satisfy the requirement). 

Consider $w_i^{(0)}$ to be the initialization of $w_i$ prior to the gradient step. Then, because, $M = \norm{w_i}_\infty$, we have that $M = \max_i \curl*{{w_i^{(0)}} - \eta \frac{\partial h(x)}{\partial w_i}} = \max_i \curl*{{w_i^{(0)}} - \frac{n \log m}{m} \frac{\partial h(x)}{\partial w_i}}$. Assuming that $w_i^{(0)}$ is initialized with bounded $\ell_\infty$ norm of $1$, then we obtain that $M \leq 1 + \frac{n \log m}{m \sqrt q }c_{\delta,m}$.
This is due to Lemma \ref{lemma:l2bound} and that
\begin{align*}
\frac{\partial h(x)}{\partial w_i} &= \frac{1}{\sqrt q} a_i \sigma' (\langle w_i, x_i \rangle) x_i \\
&= \begin{cases}
    0 & \text{ if }\langle w_i, x_i \rangle \leq 0  \\
    \frac{1}{\sqrt q} a_i  x_i & \text{ otherwise}.
\end{cases}
\end{align*}
Using this, with high probability, we require $\log \paren*{\frac{2M}{\varepsilon}} = \log\paren*{8M c_\delta \sqrt{qm}}$  bits (by replacing $\varepsilon$). Hence, with high probability, we require the number of bits to be at most:
\[
O\paren*{\log\paren*{ \frac{n \log m}{\sqrt m }c_{\delta,m}^2}} = \tilde{O}(1).
\] 
\end{proof}

We restate Theorem \ref{thm:lower_bound_memorization}.
\begin{theorem}[Lower bound for dense model]
    Given the same task as above, a dense Transformer requires $\tilde{\Omega}(n)$ parameters to solve the memorization task.
\end{theorem}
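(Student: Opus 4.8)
The plan is to run a counting (information-theoretic) argument. The task fixes $n$ input sequences $X^1,\dots,X^n$ (drawn from the Gaussian ensemble) and asks the transformer to realize an arbitrary label vector $(y_1,\dots,y_n)\in\{\pm 1\}^n$ via $\operatorname{sign}(f(X^i))$. There are $2^n$ such label vectors. On the other hand, a dense transformer of the class considered here, with $W$ scalar parameters each stored in $c\log N = \tilde O(1)$ bits, can take at most $2^{cW\log N}$ distinct parameter settings, hence computes at most that many distinct functions on the fixed set of inputs. For the architecture to be capable of fitting \emph{every} labeling of these $n$ points we therefore need $2^{cW\log N}\ge 2^n$, which gives $W = \tilde\Omega(n)$.

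Concretely I would proceed as follows. First, fix (condition on) a realization of the $n$ input sequences; the lower bound is a worst-case statement over labelings, so it suffices to exhibit one ``bad'' collection of inputs, and in fact a generic draw works since only the labels are adversarial. Second, observe that the map from the (quantized) parameter vector $\theta$ to the induced dichotomy $(\operatorname{sign}(f_\theta(X^1)),\dots,\operatorname{sign}(f_\theta(X^n)))\in\{\pm1\}^n$ has image of size at most the number of representable parameter vectors, namely $2^{cW\lceil\log N\rceil}$ where $W$ is the total parameter count and each parameter uses $O(\log N)$ bits. Third, ``solving the memorization task'' means this map is surjective onto $\{\pm1\}^n$, so $2^{cW\lceil\log N\rceil}\ge 2^n$, i.e. $W\ge n/(c\lceil\log N\rceil) = \tilde\Omega(n)$ since $N$ is polynomial in the relevant parameters (so $\log N = \tilde O(1)$). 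I would state the precise bit-budget assumption explicitly to mirror the hypothesis already used for the MoE upper bound in Theorem~\ref{thm:moe_transformer}.

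The only genuine subtlety — and the step I expect to need the most care — is making the ``at most $2^{cW}$ functions'' claim airtight given that the transformer's computation involves a softmax and real-valued intermediate quantities: one must be clear that \emph{with $O(\log N)$-bit precision} the entire forward pass, including attention, is a function of finitely many bits of parameters (and the inputs are fixed), so the number of achievable sign patterns on the fixed inputs is controlled purely by the number of parameter bits. This is exactly the ``logarithmic bit-precision'' modeling assumption stated in the setup, so no new machinery is needed — it just has to be invoked correctly. A secondary point is bookkeeping on what counts toward $W$ (attention matrices $Q_h,V_h,K_h$, the embedding $\phi$ restricted to the finite token set or, in the memorization setting, the input dimension $m$, and the MLP parameters $\mW,\vb,\vu$); since we only need a lower bound, we may simply take $W$ to be the total and conclude $W=\tilde\Omega(n)$, and I would remark that this already dominates the $m$-dependence whenever $n\gg m$, matching the separation claimed in the paper.
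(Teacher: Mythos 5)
Your proposal is correct and follows the same counting argument as the paper's proof: both fix the $n$ inputs, observe that a dense transformer with $W$ parameters at $\tilde O(1)$ bits each realizes at most $2^{\tilde O(W)}$ distinct functions, and conclude that surjectivity onto the $2^n$ possible label vectors forces $W = \tilde\Omega(n)$. Your additional remark on why the $O(\log N)$-bit-precision modeling assumption makes the forward pass a finite function of the parameter bits is a welcome clarification that the paper leaves implicit.
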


\begin{proof}[Proof of Theorem \ref{thm:lower_bound_memorization}]
    Let $c$ be the number of bits used for encoding each parameters (and we assume that $c$ is logarithmic in the problem parameters).
    Denote by $\cH$ the class of all transformers with $W$ parameters and $c$ bits per parameters. Since $\cH$ is a finite class, where each function in the class can be encoded with $cW$ bits, we have $\abs{\cH} \le 2^{cW}$. 
    Let $X^1, \dots, X^N \in \mathbb{R}^{n \times d}$ be the $N$ input points. Assume a $\cH$ can solve the memorization task. Then, for every choice of $y_1, \dots, y_N \in \{\pm 1\}$, there exists a transformer $f \in \cH$ s.t. $f(X_i) = y_i$ for all $i \in [N]$. There are $2^N$ possible assignments for $y_1, \dots y_N$ and therefore there are at least $2^{N}$ different functions in $\cH$. So, we get $2^N \le \abs{\cH} \le 2^{cW}$ and therefore $W \ge N/c$.
\end{proof}

\begin{lemma}[Active parameter comparison between dense and sparse transformers]
There exist cases in which the amount of active parameters required to perform memorization is less for a sparse transformer than a dense transformer.
\end{lemma}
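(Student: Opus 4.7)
The plan is to directly combine the upper bound from Theorem~\ref{thm:moe_transformer} with the lower bound from Theorem~\ref{thm:lower_bound_memorization}, instantiated in a regime where $n \gg m$. By Theorem~\ref{thm:moe_transformer}, with probability at least $0.99$ over the Gaussian inputs there exists a one-layer MoE with $K$ experts that performs the memorization task using $\tilde{O}(n/K + Km)$ active parameters. By Theorem~\ref{thm:lower_bound_memorization}, every dense transformer performing the same task requires $\tilde{\Omega}(n)$ parameters, and since a dense model has no routing its active parameter count equals its total parameter count.

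The next step is to optimize the choice of $K$ in the MoE bound. The two summands $n/K$ and $Km$ are balanced by taking $K \asymp \sqrt{n/m}$, at which point the MoE active count becomes $\tilde{O}(\sqrt{nm})$. To respect the ``power of two'' convention used in the construction preceding Lemma~C.6, I would take $K$ to be the largest power of two at most $\sqrt{n/m}$; this only affects constants. The sample-balance hypothesis $n \geq K^2 \ln(K/\delta)/2$ reduces to $m = \tilde{\Omega}(1)$ under this choice, which is automatically satisfied.

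With this choice in hand, the separation is immediate: in the regime $n \gg m$ one has $\sqrt{nm} = o(n)$, so the MoE achieves memorization with $\tilde{O}(\sqrt{nm})$ active parameters while any dense transformer needs $\tilde{\Omega}(n)$ active parameters, exhibiting the claimed case. I would conclude by explicitly exhibiting, for instance, $m$ fixed and $n \to \infty$, where the ratio of active parameters between the two models tends to infinity.

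I do not expect any genuine obstacle here, since both ingredient theorems are already proved; the only minor bookkeeping is checking that the admissibility conditions of Theorem~\ref{thm:moe_transformer} (power-of-two $K$, sample-balance inequality, and the high-probability event for the Gaussian inputs) are jointly satisfied for the chosen $K$, which is routine.
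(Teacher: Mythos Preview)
Your proposal is correct and follows the same strategy as the paper: combine the MoE upper bound on active parameters with the dense lower bound and exhibit a regime where the former is asymptotically smaller. The paper's appendix proof is actually even more minimal than yours---rather than optimizing $K \asymp \sqrt{n/m}$ to get the $\tilde{O}(\sqrt{nm})$ versus $\tilde{\Omega}(n)$ separation (which matches the main-text discussion), it simply fixes any $K>1$ and observes that $n/K + K < n$ for $n$ large enough, which already suffices for the existential claim.
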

\begin{proof}
    As shown, a dense transformer requires $\tilde{\Omega}(n)$ parameters (and active parameters) to perform memorization of $N$ sequences. In contrast, for $n$ sufficiently large and fixed $K>1$, it holds that $\frac{n}{K} +K <n$, which shows that the number of active parameters required is less for a sparse transformer.
\end{proof}

\end{document}